\def\eqref#1{equation~\ref{#1}}
\def\1{\bm{1}}
\DeclareMathAlphabet{\mathsfit}{\encodingdefault}{\sfdefault}{m}{sl}
\SetMathAlphabet{\mathsfit}{bold}{\encodingdefault}{\sfdefault}{bx}{n}
\DeclareMathOperator*{\argmin}{arg\,min}
\definecolor{checkgreen}{RGB}{34,139,34}  %
\definecolor{crossred}{RGB}{178,34,34}    %
\algrenewcommand\algorithmicrequire{\textbf{Input:}}
\algrenewcommand\algorithmicensure{\textbf{Output:}}
\newtheorem{definition}{Definition}  %
\newcommand{\oursmethod}{TEAL\xspace}
\title{Training-Free Activation Sparsity in \\ Large Language Models}
\author{
\textbf{\hspace{-0.25cm}James Liu$^{1,2}$\thanks{Correspondence to \href{mailto:jamesll@mit.edu}{\url{jamesll@mit.edu}}. Work done during an internship at Together AI.}
\hspace{1mm} Pragaash Ponnusamy$^2$ \hspace{1mm} Tianle Cai$^3$} \hspace{1mm} Han Guo$^{1}$ \hspace{1mm} Yoon Kim$^1$ \hspace{1mm} Ben Athiwaratkun$^2$ \\
\\ $^1$ Massachusetts Institute of Technology\quad $^2$ Together AI \quad $^3$ Princeton University
\\
\\
\centerline{\faGithub \,\,\, {\url{https://github.com/FasterDecoding/TEAL}}}
}
\begin{document}

\maketitle
\vspace{-2mm}
\begin{abstract}
Activation sparsity can enable practical inference speedups in large language models (LLMs) by reducing the compute and memory-movement required for  matrix multiplications during the forward pass. 
However, existing methods face limitations that inhibit widespread adoption. Some approaches are tailored towards older models with ReLU-based sparsity, while others require extensive continued pre-training on up to hundreds of billions of tokens. 
This paper describes \oursmethod (\textbf{T}raining-Fre\textbf{e} \textbf{A}ctivation Sparsity in \textbf{L}LMs), a simple training-free method that applies magnitude-based activation sparsity to hidden states throughout the entire model. \oursmethod achieves 40-50\% model-wide sparsity with minimal performance degradation across Llama-2, Llama-3, and Mistral families, with sizes varying from 7B to 70B. We improve existing sparse kernels and demonstrate wall-clock decoding speed-ups of up to 1.53× and 1.8× at 40\% and 50\% model-wide sparsity. \oursmethod is compatible with weight quantization, enabling further efficiency gains. 
\vspace{-2mm}
\end{abstract}

\section{Introduction}
\vspace{-2mm}
\label{section:introduction}

Large language models (LLMs) demonstrate that scaling in both parameter count and training data leads to capabilities  that are useful for addressing a variety of downstream tasks \citep{brown2020languagemodelsfewshotlearners}. However, the large number of parameters in modern LLMs can lead to substantial challenges during  inference. In typical small-batch deployment settings, autoregressive inference is \textit{memory-bound}, i.e.,  bottlenecked by the speed at which the parameters can be moved from off-chip to on-chip memory. This is in contrast to LLM training and prefill inference, which is generally \emph{compute-bound}, i.e.,  bottlenecked by the speed at which  computation can performed. A core strategy for overcoming this \emph{memory wall} \citep{gholami2024aimemorywall} is through weight  quantization \citep{frantar-gptq,shao2023omniquant,yuan2023rptq,lin2024awqactivationawareweightquantization,dettmers2023spqr,tseng2024quipbetterllmquantization,egiazarian2024extreme,liu2024bitdeltafinetuneworthbit} and sparsification \citep{wang2019structured,frantar2023sparsegpt,xia2023sheared,ma2023llm}, which can lead to practical speed-ups when coupled with specialized kernels that move the weights from off-chip to on-chip memory in quantized/sparse formats \citep{kim2023squeezellm,dettmers2024qlora,frantar2024marlin,Microsoft,xia2024flash,guo2024fast}. 

The above methods directly   compress a model's weights and apply the same (quantized/sparse) matrix to all inputs. 
Activation sparsity \citep{chen2023sparsevitrevisitingactivationsparsity,raihan2020sparseweightactivationtraining,pmlr-v119-kurtz20a}  is an alternative method which enforces \emph{input-dependent} structure on the weight matrices by leveraging (or inducing) sparsity in the hidden states. Since the weight channels corresponding to zero-valued activations are not used in computation, speed-up can be  realized by selectively omitting these weights during memory transfer,  which is possible due to the hardware-friendly channel-wise sparsity pattern. 
In older LLMs, activation sparsity is  largely made possible by the high natural  sparsity (around 95\%) in the intermediate states of the MLP blocks in ReLU-based Transformer models \citep{li2023lazyneuronphenomenonemergence}.
 Based on this,   \citet{liu2023dejavucontextualsparsity} propose \textit{DejaVu}, which learns a small auxiliary model that predicts the contextual activation sparsity patterns of future layers, and realize a $2\times$ wall-clock speed-up on OPT-175B~\citep{zhang2022optopenpretrainedtransformer}. Because the hidden state is extremely sparse, the less expressive auxiliary model can afford to overestimate non-zero activations while maintaining accuracy and efficiency (e.g., 20\% predicted vs. 5\% actual non-zero entries).

 \begin{wrapfigure}[18]{R}{0.5\textwidth}
\centering
\includegraphics[width=\linewidth]{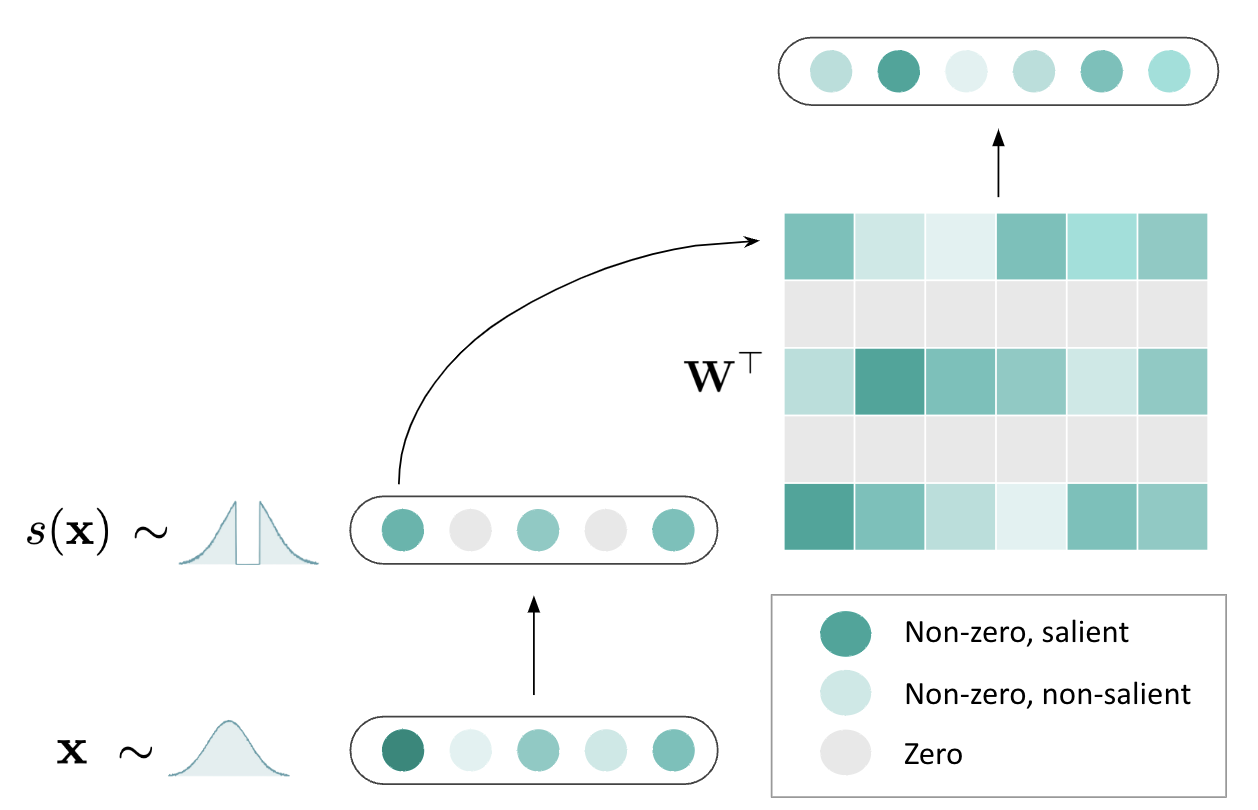}
\caption{\textbf{Overview of \oursmethod}. During decoding, \oursmethod thresholds low-magnitude activation entries to zero, which obviates the need to move the associated weight channels onto the registers, thus enabling wall-clock speed-ups.}
\vspace{-6mm}
\label{fig:visualizer}
\end{wrapfigure}

However, modern LLMs have largely moved away from ReLU-based feedforward layers  due to their worse  performance compared to variants like SwiGLU \citep{shazeer2020gluvariantsimprovetransformer}. In these models the activations are no longer naturally sparse,  making it difficult to apply methods like DejaVu. And while recent works have found  that replacing SiLU with ReLU in the MLP blocks and performing continued pre-training can ``recover'' models that exhibit high activation sparsity (thus making older methods applicable) \citep{mirzadeh2023relustrikesbackexploiting,song2024prosparseintroducingenhancingintrinsic,song2024turbosparseachievingllm}, such methods require training on up to hundreds of billions of tokens.

This work describes \oursmethod (\textbf{T}raining-Fre\textbf{e} \textbf{A}ctivation Sparsity in \textbf{L}LMs), a simple, training-free approach that applies activation sparsity based on magnitude pruning. %
\oursmethod is based on the observation that distributional shapes in LLaMA-architecture LLMs are zero-mean unimodal. By pruning low-magnitude, non-salient activations, we achieve 40-50\% model-wide (input-dependent) sparsity, in contrast to prior work which only achieves sparsity in portions of the model \citep{lee2024catscontextuallyawarethresholdingsparsity}. We realize wall-clock speed-ups of up to $1.53\times$ and $1.8\times$ at 40\% and 50\% sparsity respectively through specialized kernels, and further demonstrate compatibility with weight quantization.

\vspace{-2.5mm}
\section{Related Work}
\vspace{-2.5mm}

Conditional computation \citep{bengio2013deeplearningrepresentationslooking,bengio2016conditionalcomputationneuralnetworks} alleviates the burden of training and serving by selectively activating  parts of a model. \citet{shazeer2017outrageouslylargeneuralnetworks} propose Mixture-of-Experts (MoE) in language models, applying conditional computation to feed forward networks. Mixture-of-Experts models decouple parameter count with computational footprint \citep{fedus2022switchtransformersscalingtrillion}, and demonstrate superior scaling laws compared to dense baselines \citep{clark2022unifiedscalinglawsrouted}. Dense models can also be converted into MoE models after pre-training \citep{zhang2022moeficationtransformerfeedforwardlayers,szatkowski2024exploitingactivationsparsitydense,zheng2024learnefficientbuildstructured}.

{Activation sparsity} occurs when a significant portion of a model's hidden states contain zero-valued entries, and can be seen as an instance of conditional computaton. Activation sparsity is known to naturally emerge in the intermediate states of ReLU-based MLPs \citep{li2023lazyneuronphenomenonemergence}. \citet{liu2023dejavucontextualsparsity} leverage activation sparsity to accelerate LLM inference by avoiding the transfer of weight channels associated with zero-valued entries to GPU registers. \citet{song2023powerinferfastlargelanguage} and \citet{alizadeh2024llmflashefficientlarge} extend activation sparsity to CPU offloading, reducing weight transfer from CPU to GPU memory. However, newer architectures typically make use of non-ReLU-based MLPs (e.g., SwiGLU, \citealp{shazeer2020gluvariantsimprovetransformer}), making these off-the-shelf methods difficult to use in practice. 

Recent work has thus focused on {reintroducing activation sparsity} in  newer architectures. \citet{mirzadeh2023relustrikesbackexploiting} replace SiLU or GeLU activation functions with ReLU, followed by continued pretraining on hundreds of billions of tokens. \citet{zhang2024relu2winsdiscoveringefficient} experiment with different activations and find Squared ReLU \citep{so2022primersearchingefficienttransformers} to be the most effective replacement. \citet{song2024turbosparseachievingllm} and \citet{song2024prosparseintroducingenhancingintrinsic} introduce techniques such as activation regularization to push sparsity even higher in adapted models. \citet{wang2024qsparselargelanguagemodels} combine magnitude pruning with Squared ReLU and quantized activations, and establish  scaling laws for sparsely activated LLMs during pretraining. 

\citet{lee2024cats} propose \emph{CATS}, and realize {training-free activation sparsity} on SwiGLU based LLMs by applying magnitude pruning on the output of $\textbf{W}_\text{gate}$, with the intuition that in the training-free setting, ReLU-based methods suboptimally zero out nontrivial negative values but keep positive values with lower magnitude intact. They achieve up to 50\% sparsity in $\textbf{W}_\text{up}$ and $\textbf{W}_\text{down}$ for Mistral and Llama-2-7B. However, other matrices including $\textbf{W}_\text{gate}$ and $\textbf{W}_\text{q,k,v,o}$ are computed densely, resulting in lower model-wide sparsity (roughly 25\%), whereas we target every matrix in the model. We refer the reader to Appendix \ref{appendix:trans-arch} for formal definitions of the weight matrices and their interactions within each Transformer block.

\vspace{-2.5mm}
\section{Background: Activation Sparsity in Neural Networks}
\vspace{-2.5mm}
\label{section:background}

\label{subsection:activation-sparsity}
The activation sparsity of a hidden state $\mathbf{x}$ is defined as the proportion of zero-valued entries, which can interact with the model in two ways. The first is \textit{input sparsity}:  when computing $\mathbf{y}=\mathbf{x}\mathbf{W}^\top$ for $\mathbf{x} \in \mathbb{R}^m, \mathbf{W} \in \mathbb{R}^{n\times m}$, the  columns $\mathbf{W}_{:,i}$ corresponding to zero-valued entries $\mathbf{x}_i$ are unused. The second is \textit{output sparsity}: when computing $\mathbf{y}= \mathbf{s} \odot (\mathbf{x}\mathbf{W}^\top)$ for the aforementioned parameters and mask $\mathbf{s} \in \mathbb{R}^{n}$, the rows $\mathbf{W}_{i, :}$ corresponding to zero-valued entries $\mathbf{s}_i$ are unused. CATS makes use of output sparsity on GLU variants, treating $\mathbf{s} = \mathrm{sparsify}(\sigma(\mathbf{x} \mathbf{W}_\text{gate}^\top))$ as the mask and applying output sparsity on $\mathbf{xW}_\text{up}^\top$, with the intuition that $\sigma (\cdot )$ serves as a gating mechanism. Interestingly, we find in Section ~\ref{subsubsec:inputvoutput} that input sparsity is still preferable in the training-free case for SwiGLU.

In LLMs, the computation $\mathbf{xW}^\top$ is memory-bound in the decoding phase due to the high memory footprint of weights, and thus reducing the transfer of unnecessary entries (i.e., rows/columns corresponding to zero-valued activations) can  enable speed-ups. However, GPUs are designed to fetch multiple consecutive memory entries in a single access to maximize memory bandwidth. When memory accesses are non-contiguous, as they are when unnecessary entries are scattered, this leads to inefficient use of memory bandwidth. To ensure memory coalescing and contiguous memory access, it is crucial to store weights associated with input sparsity in a \textit{column-major} format, and weights associated with output sparsity in a \textit{row-major} format. %

\vspace{-2mm}
\section{\oursmethod: Training-Free Activation Sparsity in LLMs }
\vspace{-2mm}

\subsection{Motivating Study: Distributional Properties of Activations in LLMs}

\begin{figure}[h]
\vspace{-2mm}
\centering
\includegraphics[width=\textwidth]{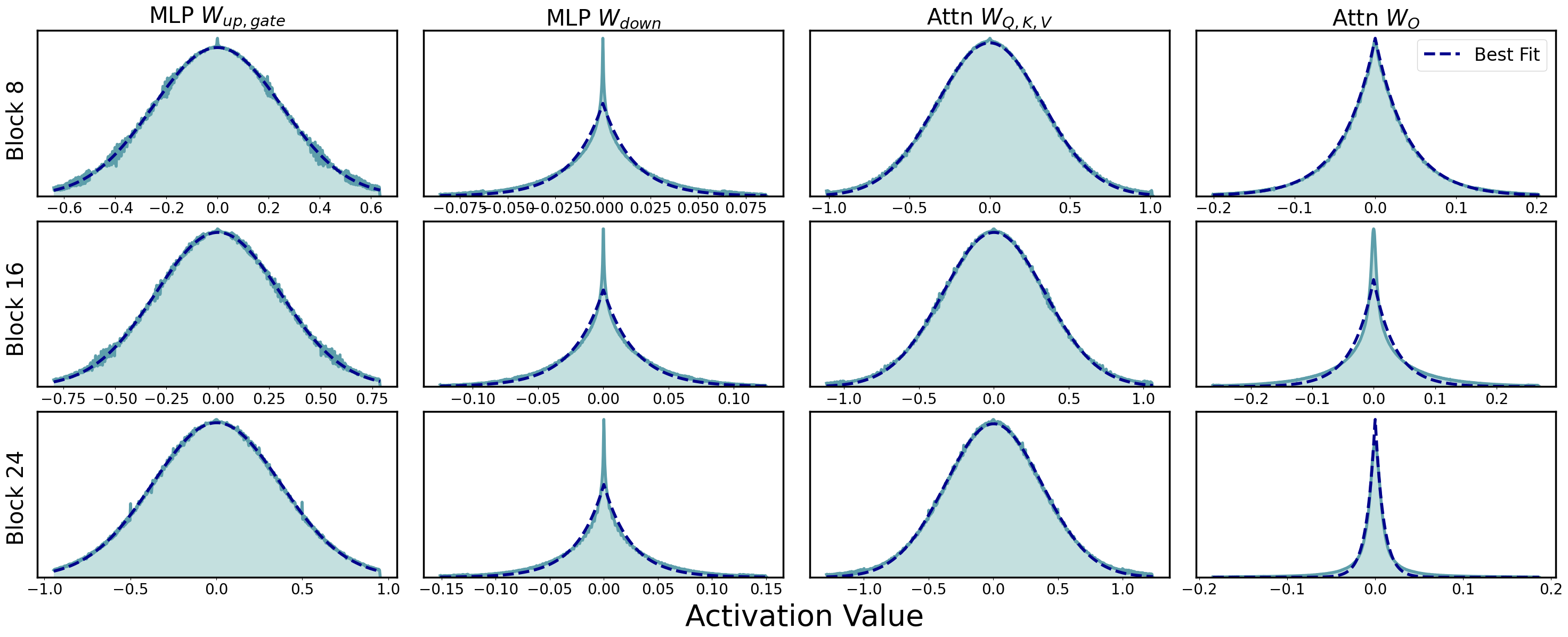}
\vspace{-3mm}
\caption{Activation distributions of Llama-3-8B's four hidden states at Blocks 8, 16, and 24. The activations  preceding the Attention and MLP blocks typically exhibit Gaussian-like shapes, while intermediate states within these blocks exhibit Laplacian-like shapes. The best-fit Gaussian/Laplace distributions are overlayed in blue.}
\vspace{-2mm}
\label{fig:activation-distributions-quartet}
\end{figure}

\interfootnotelinepenalty=10000

We perform a preliminary study  of the distributional properties of activations of LLMs.
We collect activations of Llama-3-8B \citep{dubey2024llama3herdmodels} sampled from C4 \citep{raffel2023exploringlimitstransferlearning} at the four hidden states in a Transformer block,\footnote{Throughout, we use ``block'' to refer to an entire Transformer block consisting of the seven matrices and ``layer'' to refer to an individual layer (corresponding to a single matrix) within the Transformer block.} and visualize them in Figure~\ref{fig:activation-distributions-quartet}. As indicated by prior work, some of the activations are heavy-tailed and contain outliers \citep{dettmers2022llmint88bitmatrixmultiplication,xiao2022smoothquant,wei2022outlier,nrusimha2024mitigating}. The hidden states are moreover {zero-mean unimodal}, and qualitatively fall into two distinctly shaped distributions. The hidden states before the Attention and the MLP layers tend to be Gaussian-like, while the hidden states in the intermediate of such layers tend to be Laplacian-like.
The concentration of the activations around zero motivates our magnitude-based activation pruning approach.

\textbf{Remark.} We do not attempt to explain why these distributions are shaped the way they are, nor do we give the theoretical underpinnings of why activation sparsity works. However, we make a few general observations. LLM weights are typically Gaussian 
\citep{dettmers2023qloraefficientfinetuningquantized}, and multiplying an independent isotropic Gaussian vector with an independent Gaussian matrix follows a multivariate generalized Laplace distribution \cite{mattei2017multiplyinggaussianmatrixgaussian} (the weights and activations are clearly not independent in practice). Attention is a data-dependent linear operator \citep{poli2023hyenahierarchylargerconvolutional} which may have similar properties. Distributions may be zero-mean due to layer normalization \citep{ba2016layernormalization}.
We further derive the expected error induced by pruning low-magnitude activations in Appendix \ref{appendix:proof}, under a more restrictive assumption that weights and activations are independent Gaussians.

\vspace{-2mm}
\subsection{\oursmethod}

\vspace{-2mm}

The above analysis motivates our simple approach for activation sparsity based on magnitude pruning.  
While small-magnitude activations could still have a large effect on the output if the norms of corresponding channels of the weight matrix are large, we find that magnitude-based pruning is empirically effective.  
We first define a sparsification function for an activation vector as follows:

\begin{definition}
\label{def:sparsification-function}
For a random vector $\tilde{\mathbf{x}} = (\tilde{x}_1, \dots, \tilde{x}_n)$ and sparsity level $p \in [0, 1]$, define the threshold $t_p$ as
\[ \frac{1}{n} \sum_{i=1}^n \mathbb{P}(|\tilde{x}_i| \leq t_p) = p.  \]
The sparsification function $s_{t_p} : \mathbb{R}^n \to \mathbb{R}^n$ is defined as:
\[ s_{t_p}(\mathbf{x}) = (s_{t_p}(x_1), \ldots, s_{t_p}(x_n)) \]
where $\mathbf{x}$ is a realization of $\tilde{\mathbf{x}}$, and for each component:
\[ s_{t_p}(x_i) = 
\begin{cases}
0 & \text{if } |x_i| \leq t_p \\
x_i & \text{otherwise}
\end{cases} 
\]
\end{definition}
In practice we estimate $t_p$ using an empirical distribution constructed offline using activations from generic text. %
The sparsity level ${p}$ is characterized entirely by threshold $t_{p_i}$, which is useful in both implementation and kernel design (Section \ref{subsection:hardware-aware-acceleration}). 

Let $\mathcal{W}$ be the set of matrices in the MLP and Attention blocks of a model, and further let $N=|\mathcal{W}|$. We define a model-level sparsification configuration as $\mathbf{p} = (p_1, ..., p_N)$, where each $p_i \in [0, 1]$ represents the sparsity level for the corresponding matrix $\mathbf{W}_i$.
For each matrix $\mathbf{W}_i \in \mathcal{W}$, we define its layer-level sparsified forward pass as: 
\[\hat{\mathbf{Y}} = s_{t_{p_i}}(\mathbf{x}) \mathbf{W}_i^\top\]
for input $\mathbf{x}$ and magnitude-based sparsification function $s_{t_{p_i}}(\cdot)$ as defined in Definition \ref{def:sparsification-function}. We apply this sparsified forward pass to all $N$ matrices to obtain the model-level sparsified forward pass. %

\begin{wrapfigure}[18]{r}{0.47\textwidth}
\begin{minipage}{0.47\textwidth}
\vspace{-1.7cm}  %

\begin{algorithm}[H]
\caption{Block-wise Greedy Optimization}
\fontsize{7.5}{9}\selectfont %
\label{algo:greedyopt}
\begin{algorithmic}[0]
\Require Block $B$, base step size $\alpha$,
\Statex \hspace{\algorithmicindent} input $\mathbf{X} \in \mathbb{R}^{B\times seq \times d}$, $n$ matrices
\State \textcolor{gray}{\# Record size (memory footprint) of matrices}
\State $f_i \gets \text{size}(\mathbf{W}_i)$ for $i=1,\ldots,n$
\State $F \gets \sum_{i=1}^n f_i$ \textcolor{gray}{\# Find size of block}
\State \textcolor{gray}{\# Initialize block and layer sparsities to zero}
\State $\mathbf{p} \gets \mathbf{0}_n$, $P \gets 0$
\State $\mathbf{Y}_\text{gt} \gets B(\mathbf{X})$ \textcolor{gray}{\# Forward pass through block $B$ to find ground truth output}
\While{$P < 1$}
    \For{$i=1$ to $n$}

        \State $\delta_i \gets \alpha \cdot (F / f_i)$
        \State \textcolor{gray}{\# Error if we further sparsify this layer}
        \State $p_i \mathrel{+}= \delta_i$
        \State $\hat{\mathbf{Y}}_i \gets L(\mathbf{X}, p'_i)$
        \State $E_i \gets \|\mathbf{Y}_\text{gt} - \hat{\mathbf{Y}}_i\|_2$
        \State $p_i \mathrel{-}= \delta_i$
    \EndFor
    \State $j \gets \argmin_i E_i$ 
    \State $p_j \mathrel{+}= \delta_j$ \textcolor{gray}{\# Increment layer with lowest error}
    \State $P \gets \sum_{i=1}^n (p_i \cdot f_i) / F$
    \State Record $\mathbf{p}$, $P$
\EndWhile
\end{algorithmic}
\end{algorithm}
\end{minipage}
\end{wrapfigure}

\subsection{Block-wise Greedy Optimization}
\label{subsubsec:greedyopt}

How should we find the optimal $\mathbf{p}$? We initially tried a gradient-based approach to learning the thresholds based on the straight through estimator \citep{bengio2013estimatingpropagatinggradientsstochastic}, but encountered optimization issues. We instead used a simple greedy approach illustrated in Algorithm \ref{algo:greedyopt}, which was found to be effective.

For each Transformer block, we seek to minimize the block-wise $\ell_2$ activation error subject to a block-level sparsity constraint. Each Transformer block consists of seven matrices: $\mathbf{W}_{\text{q}}, \mathbf{W}_{\text{k}}, \mathbf{W}_{\text{v}}, \mathbf{W}_{\text{o}}, \mathbf{W}_{\text{gate}}, \mathbf{W}_\text{up}, \mathbf{W}_\text{down}$.
Algorithm \ref{algo:greedyopt} initializes the sparsity levels of all layers to zero, and attempts to increment the sparsity level of each layer by an amount inversely proportional to its memory footprint. The layer with the lowest $\ell_2$ activation error is incremented, and the block-level sparsity plus associated layer-level sparsities are recorded. We assign the same block-level sparsity level to every Transformer block; therefore, all blocks have the same target sparsity level, but the individual layer-level sparsities could be different across different blocks.

\textbf{Cost.} We describe the cost of our method. The time complexity is $\mathcal{O}(\frac{Mn^2}{\alpha})$ forward passes, where $M$ is the number of samples, $n$ is the number of matrices, and $\alpha$ is the average step size. In practice, we use length 2048 samples and $M,n,\alpha=10,7,0.05$. The resulting cost over all blocks is therefore $10\cdot 7^2 \cdot \frac{1}{0.05}=9800$ forward passes, which is less than one GPU-hour on an A100 for Llama-3-8B. It consumes minimal device memory due to its being block-wise and requiring no backpropagation.

\vspace{-2mm}
\subsection{Hardware Aware Acceleration}

\vspace{-2mm}
\label{subsection:hardware-aware-acceleration}

We develop a specialized sparse GEMV kernel to achieve practical speed-ups, building on the Triton-based \citep{triton} kernel introduced by DejaVu \citep{liu2023dejavucontextualsparsity}. This kernel takes in an input $\mathbf{x}$, boolean sparsity mask $\mathbf{s}$ and matrix $\mathbf{W}$, and returns $(\mathbf{x} \odot \mathbf{s})\mathbf{W}^\top$. Wall-clock speed-up is realized in three ways: (1) $\mathbf{W}$ is stored in column major format for optimal memory coalescing; (2) Columns $\mathbf{W}_{:,i}$ are selectively loaded based on the truth value of $\mathbf{s}_i$; (3) SplitK work decomposition is used, enabling finer-grained parallelism across thread blocks, combining partial results through atomic adds. 

Our kernel makes the following improvements on top of the original kernel: (1) We fuse the mask creation process, as $\mathbf{s} = \mathbf{x}[|\mathbf{x}|>t_p]$ is entirely characterized by $\mathbf{x}$ and $t_p$ in \oursmethod; (2) We accumulate along the outer SplitK dimension in \texttt{FP16} (keeping the inner in-register accumulation in \texttt{FP32}), as writing to global memory in \texttt{FP32} results in significant traffic; 
(3) We specify an eviction policy in PTX, prioritizing cache retention for activations which are reused across multiple thread blocks, and deprioritizing weights which are block-specific. This guarantees that activations are persistent in L2 cache.

\begin{figure}[h]
\centering
\includegraphics[width=\linewidth]{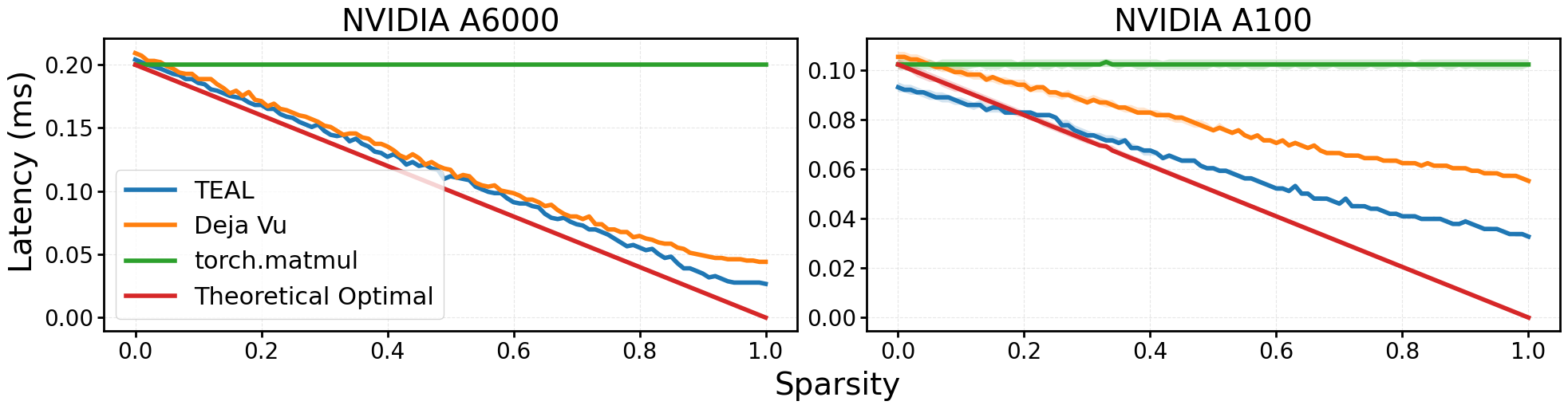}
\vspace{-4mm}
\caption{Latency vs. sparsity for matrix-vector multiplication (1x4096 × 4096x14336), comparing \oursmethod to Deja Vu. 'Theoretical Optimal' shows the latency reduction for \texttt{torch.matmul} assuming perfect linear scaling with sparsity.}
\vspace{-2mm}
\label{fig:kernel-plot}
\end{figure}

Figure \ref{fig:kernel-plot} shows a small speed-up on A6000, and a larger speed-up on A100 over the DejaVu kernel. Note that \texttt{torch.matmul} is not the strongest  baseline in small batch settings \citep{hong2024flashdecodingfasterlargelanguage}, which is why our kernel is faster at 0\% sparsity for A100. We use a stronger baseline for end-to-end evaluations (Section \ref{subsec:e2espeedup}). The larger speed-up on A100 can be attributed to its higher memory bandwidth, which amplifies the impact of reducing other overhead factors. 
These overhead improvements become increasingly important as memory bandwidth across device tiers improves over time, particularly for quantized models and in latency-sensitive or resource-constrained applications.

\vspace{-2mm}
\section{Results}

\vspace{-2mm}
\paragraph{Models and Datasets.} We evaluate \oursmethod on the Mistral \citep{jiang2023mistral7b}, Llama-2 \citep{touvron2023llama2openfoundation}, and Llama-3 \citep{dubey2024llama3herdmodels}  families. We measure the performance of sparsified models on language modeling using the WikiText \citep{merity2016pointersentinelmixturemodels} validation set, and on an aggregate of six downstream tasks using the EleutherAI LM Harness \citep{eval-harness}, including 5-shot MMLU, 25-shot ARC challenge, 10-shot HellaSwag, 5-shot GSM8K, zero-shot PiQA, and zero-shot Winogrande \citep{hendrycks2021measuringmassivemultitasklanguage,clark2018thinksolvedquestionanswering,zellers2019hellaswagmachinereallyfinish,cobbe2021trainingverifierssolvemath,bisk2019piqareasoningphysicalcommonsense,sakaguchi2019winograndeadversarialwinogradschema}. %
For language modeling, we evaluate all models on the same 128 random samples, using a 2048-token context and 512-token evaluation window.

\paragraph{Baselines.} We use the block-wise greedily optimized sparsities from Section \ref{subsubsec:greedyopt} for \oursmethod, and primarily compare to CATS \citep{lee2024cats} in its training-free configuration with no finetuning. We report model-level sparsities for all methods.

CATS applies sparsity to MLP parameters, and does not apply sparsity to attention parameters. In particular, CATS sparsifies the output of $\mathbf{W}_\text{gate}$, replacing $\text{SiLU}(\mathbf{x}\mathbf{W}_\text{gate})$ with $s_{t_p}(\text{SiLU}(\mathbf{x}\mathbf{W}_\text{gate}))$ for sparsification function $s_{t_p}$ associated with the distribution of $\text{SiLU}(\mathbf{x}\mathbf{W}_\text{gate})$. Overall, CATS sparsifies the intermediate state of the MLP by first performing dense computation on $\mathbf{W}_\text{gate}$, enforcing output sparsity on $\mathbf{W}_\text{up}$, and then enforcing input sparsity on $\mathbf{W}_\text{down}$. This is in contrast with \oursmethod, which enforces input sparsity on all matrices.

\begin{table}[ht]
\setlength{\tabcolsep}{12pt}
\small
\centering

\vspace{-2mm}
\caption{Perplexity results. Results between Llama-3 and Llama-2/Mistral are not directly comparable due to differing vocabulary sizes.}
\label{table:ppl-results}
\begin{tabular}{lcccccc}
\toprule
& \multicolumn{2}{c}{LLaMA-3} & \multicolumn{3}{c}{LLaMA-2} & \multicolumn{1}{c}{Mistral} \\
\cmidrule(lr){2-3} \cmidrule(lr){4-6} \cmidrule(lr){7-7}
Method / Model & \textbf{8B} & \textbf{70B} & \textbf{7B} & \textbf{13B} & \textbf{70B} & \textbf{7B} \\
\midrule
Baseline (0\%) & 5.87 & 2.93 & 5.07 & 4.50 & 3.12 & 4.92 \\
\midrule
CATS 25\% & 6.78 & 3.64 & 5.52 & 4.99 & 3.42 & 5.87 \\
\oursmethod 25\% & \textbf{5.94} & \textbf{3.02} & \textbf{5.09} & \textbf{4.51} & \textbf{3.13} & \textbf{5.01} \\
\midrule
CATS 40\% & 7.6 · 10\textsuperscript{4} & 96.97 & 43.8 & 53.9 & 171 & 2.8 · 10\textsuperscript{4} \\
\oursmethod 40\% & \textbf{6.21} & \textbf{3.52} & \textbf{5.22} & \textbf{4.60} & \textbf{3.25} & \textbf{5.13}  \\
\midrule
\oursmethod 50\% & 6.67 & 4.30 & 5.43 & 4.76 & 3.50 & 5.31 \\
\oursmethod 65\% & 9.06 & 6.29 & 6.62 & 5.50 & 4.28 & 6.23 \\
\bottomrule
\end{tabular}
\end{table}

These methods are decoding solutions primarily, but some of the prefill needs to be sparsified for meaningful evaluation on log-likelihood based tasks (such as language modeling and MMLU). For such tasks we sparsify the second half of prefill along the sequence length dimension.
See Section \ref{subsubsec:prefill} for a more detailed analysis -- most degradation in prefill is associated with the initial tokens, which is likely related to the attention sink phenomenon \citep{xiao2024efficientstreaminglanguagemodels}, and we thus need to take care not to sparsify them. We do not sparsify prefill on generation tasks (such as GSM8K).

\vspace{-2mm}
\subsection{Accuracy}

\vspace{-2mm}

\paragraph{Main Results.} \oursmethod is performant, as shown in Tables \ref{table:ppl-results} and \ref{table:eval-results}, showcasing near zero degradation at 25\%, and minimal degradation at 40\% sparsity. At 50\% sparsity, Llama-3 variants show slightly more degradation compared to older Llama-2 and Mistral variants which are still fairly performant. This falls in line with prior work showing that quantization techniques are less effective on newer models trained on more tokens \citep{huang2024empiricalstudyllama3quantization}. Most models degrade significantly at 65\% sparsity, with the exception of Llama-2-70B which is still reasonably performant. In terms of downstream task results, both of the 70B models are more sparsifiable than their smaller counterparts.

\begin{table}[ht]
\small
\setlength{\tabcolsep}{12pt}
\centering
\caption{Downstream task evaluation results. Reported results are averaged over six tasks. See Appendix \ref{appendix:full-results} for fine-grained results. We omit CATS 40\% as it is degenerate.}

\vspace{-2mm}
\label{table:eval-results}
\begin{tabular}{lcccccc}
\toprule
& \multicolumn{2}{c}{LLaMA-3} & \multicolumn{3}{c}{LLaMA-2} & \multicolumn{1}{c}{Mistral} \\
\cmidrule(lr){2-3} \cmidrule(lr){4-6} \cmidrule(lr){7-7}
Method / Model & \textbf{8B} & \textbf{70B} & \textbf{7B} & \textbf{13B} & \textbf{70B} & \textbf{7B} \\
\midrule
Baseline (0\%) \hspace{0.6cm} & 68.07 & 80.41 & 56.50 & 62.01 & 72.65 & 66.96 \\
\midrule
CATS 25\% & 64.15 & 79.25 & 54.60 & 60.48 & 71.93 & 64.25 \\
\oursmethod 25\% & \textbf{67.73} & \textbf{80.22} & \textbf{56.42} & \textbf{62.21} & \textbf{72.67} & \textbf{66.63} \\
\midrule
\oursmethod 40\% & 66.21 & 79.29 & 55.45 & 61.27 & 72.57 & 65.46  \\
\oursmethod 50\% & 63.42 & 78.26 & 54.26 & 60.41 & 72.02 & 64.16 \\
\oursmethod 65\% & 52.59 & 73.07 & 48.16 & 55.71 & 69.30 & 58.93 \\
\bottomrule
\end{tabular}
\end{table}

ReLUfication is degenerate in the training-free setting. \oursmethod outperforms CATS at both 25\% and 40\% sparsity, which is mainly due to two factors. First and most importantly, \oursmethod sparsifies every matrix in the model, not just $\mathbf{W}_\text{up}$ and $\mathbf{W}_\text{down}$, allowing us to moderate sparsity levels across the model. When applied to Llama-2-7B, CATS sparsifies the intermediate state of MLPs to 56.2\% at 25\% overall sparsity, and to 89.7\% at 40\% overall sparsity. \oursmethod avoids such extreme sparsity in any single component. Second, our design choice to use input sparsity instead of output sparsity for $\mathbf{W}_\text{up}$ yields lower error, which we analyze in Section \ref{subsubsec:inputvoutput}.

\begin{table}[ht]
\setlength{\tabcolsep}{8pt}
\small
\centering
\caption{Single-batch end-to-end inference speed results, measured in tokens per second. We exclude Mistral-7B and Llama-2-70B as they are architecturally similar to Llama-3-8B and 70B. We utilize tensor parallelism for Llama-3-70B: TP2 for A100, and TP4 for A6000.}

\vspace{-2mm}
\label{table:inference-results}
\begin{tabular}{llcccc}
\toprule
& & \multicolumn{2}{c}{LLaMA-3} & \multicolumn{2}{c}{LLaMA-2} \\
\cmidrule(lr){3-4} \cmidrule(lr){5-6}
GPU & Sparsity & \textbf{8B} & \textbf{70B} & \textbf{7B} & \textbf{13B} \\
\midrule
A6000 & Baseline & 45.32 (1.00$\times$)  & 15.93 (1.00$\times$) & 50.54 (1.00$\times$) & 26.43 (1.00$\times$) \\
\cmidrule(l){2-6}
& 0\% & 44.49 (0.98$\times$) & 15.57 (0.98$\times$) & 50.06 (0.99$\times$) & 26.25 (0.99$\times$) \\
& 25\% & 55.38 (1.22$\times$) & 18.93 (1.19$\times$) & 64.54 (1.28$\times$) & 33.67 (1.27$\times$) \\
& 40\% & 64.15 (1.42$\times$) & 20.86 (1.31$\times$) & 77.30 (\textbf{1.53$\times$}) & 40.20 (1.52$\times$) \\
& 50\% & 73.94 (1.63$\times$) & 23.77 (1.49$\times$) & 89.91 (1.78$\times$) & 47.60 (\textbf{1.80$\times$}) \\
\midrule
A100 & Baseline & 100.79 (1.00$\times$)  & 21.85 (1.00$\times$) & 110.15 (1.00$\times$) & 61.01 (1.00$\times$) \\
\cmidrule(l){2-6}
& 0\% & 92.13 (0.91$\times$) & 20.32 (0.93$\times$) & 100.97 (0.92$\times$) & 56.33 (0.92$\times$) \\
& 25\% & 112.11 (1.11$\times$) & 25.18 (1.15$\times$) & 126.14 (1.15$\times$) & 70.66 (1.16$\times$) \\
& 40\% & 126.24 (1.25$\times$) & 28.78 (1.32$\times$) & 143.85 (1.31$\times$) & 81.90 (1.34$\times$) \\
& 50\% & 134.29 (1.33$\times$) & 29.99 (1.37$\times$) & 154.02 (1.40$\times$) & 88.38 (1.45$\times$) \\
\bottomrule
\end{tabular}
\end{table}

\vspace{-2mm}

\subsection{End-to-end Decoding Speed-up}

\vspace{-2mm}
\label{subsec:e2espeedup}

We benchmark \oursmethod's end-to-end single-batch decoding latency by integrating it with GPT-Fast \citep{gptfast}. We enable CUDA graphs and \texttt{torch.compile}. Tests use Llama-2 (7B, 13B) and Llama-3 (8B, 70B) models at 0\%, 25\%, 40\%, and 50\% uniform sparsities. We use the standard inference benchmarking setup in GPT-Fast, which passes in roughly 5 input tokens and generates at most 200 output tokens. Our GPU power limit settings are 500W and 300W for A100 and A6000 respectively. As shown in Table \ref{table:inference-results}, \oursmethod achieves significant speed-ups of up to $1.53\times$ and $1.8\times$ at 40\% and 50\% sparsity respectively. \oursmethod is slower than the baseline at 0\% sparsity on A100 due to \texttt{torch.compile} strengthening the \texttt{torch.matmul} baseline. This suggests further room for optimization of our kernel. 
We find lower speedups for Llama-3-8B compared to Llama-2-7B partially due to its larger LM Head, which we do not currently sparsify. We leave the sparsification of LM Head to future work.

\subsection{Compatibility with Quantization}
\label{subsec:quantization}

We demonstrate compatibility with quantization, which is another promising direction for efficient LLM inference. We consider 8-bit channel-wise RTN, 4-bit AWQ \citep{lin2024awqactivationawareweightquantization}, and 2/3-bit QuIP\# \citep{tseng2024quipbetterllmquantization}, and plot the perplexity of Llama-2-7B on WikiText in Figure \ref{fig:quant}. The point of sharp perplexity degradation is similar across bitwidths, suggesting that errors from activation sparsity and weight quantization compound somewhat independently. %
Combining activation sparsity with weight quantization unlocks new regimes with respect to memory transferred to GPU registers, allowing for higher inference speed-up. This requires developing specialized sparse + quantized kernels, which we leave to future work.

\begin{figure}[H]
\centering
\includegraphics[width=0.9\linewidth]{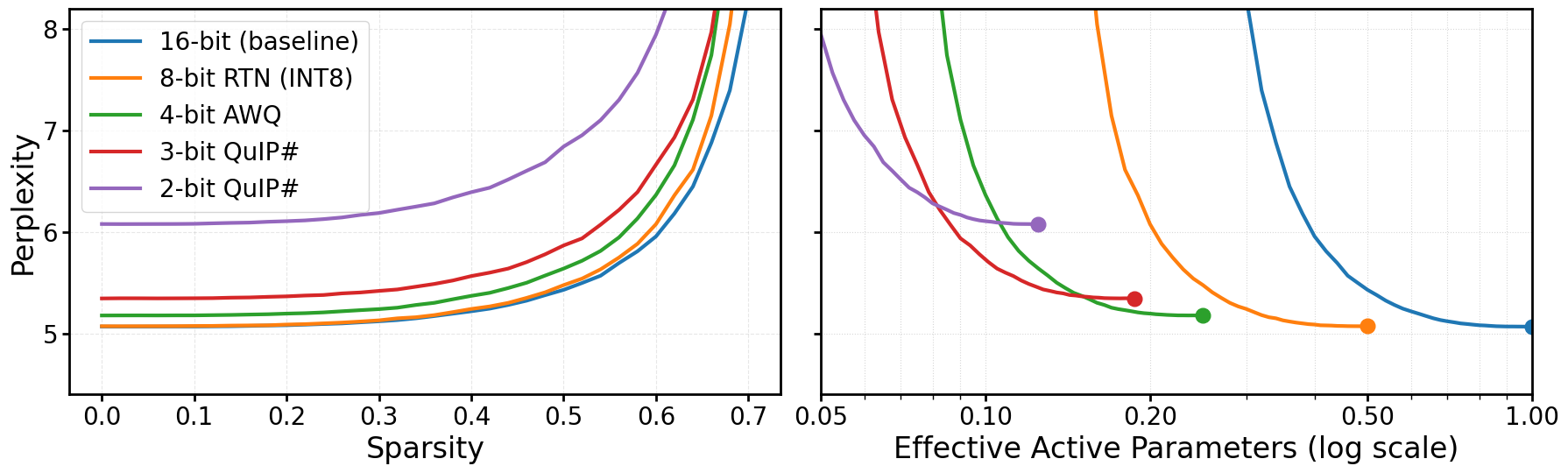}
\caption{Perplexity vs. sparsity for Llama-2-7B quantized to various bitwidths on WikiText. Left: Performance over sparsity levels. Right: Performance normalized by bitwidth.}
\label{fig:quant}
\end{figure}

\subsection{Analysis}

\subsubsection{Should $\textbf{W}_\text{up}$ have Input or Output Sparsity?}
\label{subsubsec:inputvoutput}

\oursmethod naturally differs from CATS in its treatment of $\mathbf{W}_\text{up}$. \oursmethod uses input sparsity, whereas CATS uses output sparsity with output mask $\mathbf{s} = s_{t_p}(\text{SiLU}(\mathbf{x}\mathbf{W}_\text{gate}^\top))$, with the intuition that SiLU serves as a gating mechanism. We must choose one treatment over the other due to differing memory format constraints (see Section \ref{subsection:activation-sparsity}).

\begin{figure}[htb]
    \centering
    \begin{minipage}[b]{0.48\textwidth}
        \centering
        \includegraphics[width=0.9\textwidth]{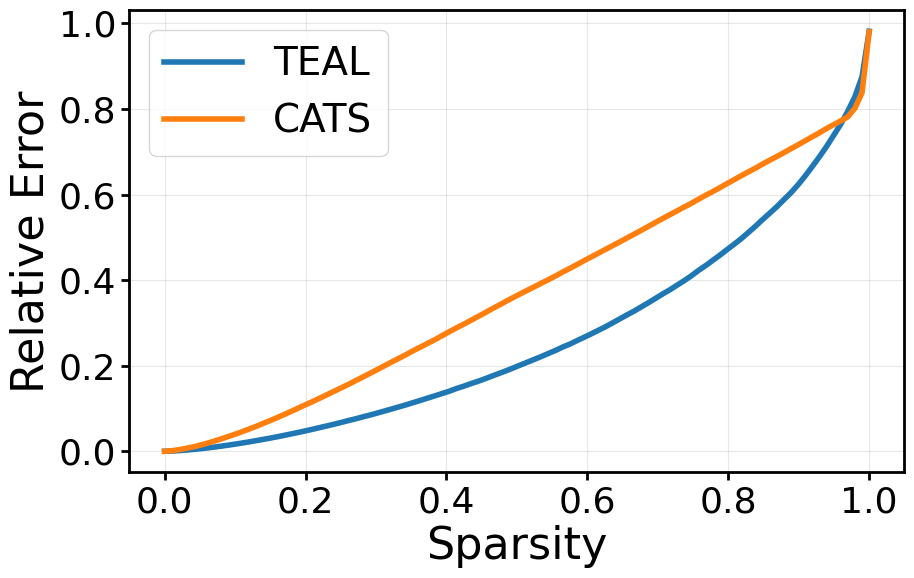}
        \caption{Layer-level activation error for $\mathbf{W}_\text{up}$ at Block 16 of Llama-3-8B: \oursmethod utilizing input sparsity, and CATS utilizing output sparsity.}
        \label{fig:usvcats}
    \end{minipage}
    \hfill
    \begin{minipage}[b]{0.48\textwidth}
        \centering
        \includegraphics[width=0.9\textwidth]{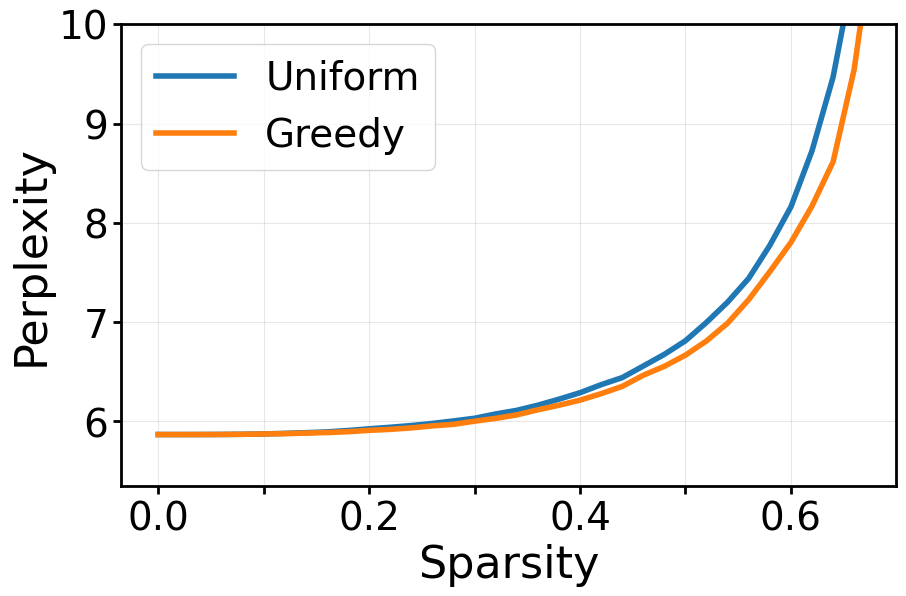}
        \caption{Perplexity of Llama-3-8B on WikiText under uniform and block-wise greedy sparsity configurations.}
        \label{fig:perplexity-plot}
    \end{minipage}

\end{figure}

We analyze the activation error in the intermediate state of MLPs, assuming $\mathbf{W}_\text{gate}$ is computed densely, as it is in CATS. The error associated with \oursmethod is $||(\mathbf{x}-s_{t_p}(\mathbf{x}))\mathbf{W}_\text{up}^\top \odot \text{SiLU}(\mathbf{x}\mathbf{W}_\text{gate}^\top)||_2$, and the error associated with CATS is $||\mathbf{x}\mathbf{W}_\text{up}^\top \odot [\text{SiLU}(\mathbf{x}\mathbf{W}_\text{gate}^\top) - s'_{t_p}(\text{SiLU}(\mathbf{x}\mathbf{W}_\text{gate}^\top))] ||_2$, where $s_{t_p}(\cdot)$ and $s'_{t_p}(\cdot)$ are sparsification functions associated with $\mathbf{x}$ and $\text{SiLU}(\mathbf{x}\mathbf{W}_\text{gate}^\top)$ respectively. We additionally normalize errors by the norm of the unsparsified product.
Figure \ref{fig:usvcats} shows that input sparsity outperforms across all levels. This is because output mask $\mathbf{s}$ has no information regarding the saliency of outputs with respect to $\mathbf{W}_\text{up}$, which is relevant since SiLU does not threshold exactly to zero. As a result, larger values of $\mathbf{x}\mathbf{W}_\text{up}$ may be unnecessarily pruned.

\subsubsection{Block-wise Greedy Sparsities}

We observe in Figure \ref{fig:perplexity-plot} that the block-level greedy method in Section \ref{subsubsec:greedyopt} outperforms the uniform configuration across all sparsity levels. The resultant sparsities can be used to analyze the workings of modern LLMs. We make two interesting observations about Llama-3-70B, which tend to hold for the other models we analyze.

\begin{figure}[htbp]
    \centering
    \includegraphics[width=\linewidth]{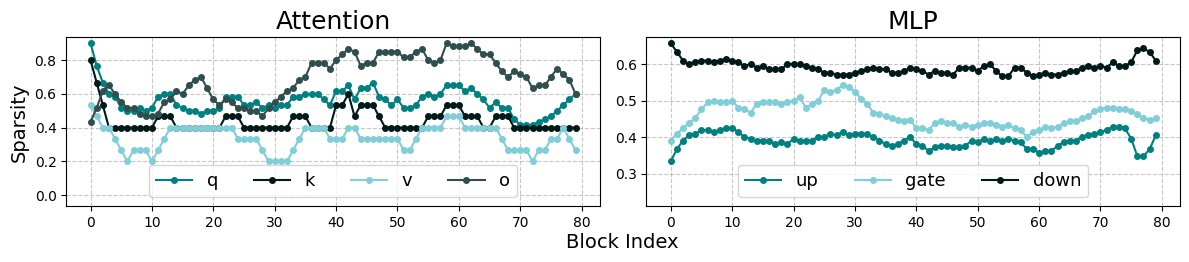}
    \caption{Greedy sparsities for Llama-3-70B at 50\% model-level sparsity. Left: Attention parameters. Right: MLP parameters.}
    \label{fig:all_obs}
\end{figure}

\textbf{Attention:} We plot sparsities of $\mathbf{W}_\text{q},\mathbf{W}_\text{k},\mathbf{W}_\text{v},\mathbf{W}_\text{o}$ at 50\% model-level sparsity. $\mathbf{W}_\text{q},\mathbf{W}_\text{k}$ exhibit high sparsifiability in Block 0, followed by a sharp decline. $\mathbf{W}_\text{o}$'s sparsifiability varies dynamically: it starts at 50-60\%, peaks at 80-90\% mid-model, then returns to 50-60\% in the final blocks. The blocks where $\mathbf{W}_\text{o}$ exhibits high sparsifiability seem to align with those of the Attention modules pruned in \emph{FinerCut} \citep{zhang2024finercutfinergrainedinterpretablelayer}, suggesting that the sparsifiability of $\mathbf{W}_\text{o}$ may have some correlation to saliency in Attention modules.

\textbf{MLP:} We plot sparsities of $\mathbf{W}_\text{up},\mathbf{W}_\text{gate},\mathbf{W}_\text{down}$ at 50\% model-level sparsity. Across all blocks, $\mathbf{W}_\text{down}$ is more sparsifiable than $\mathbf{W}_\text{gate}$, which is more sparsifiable than $\mathbf{W}_\text{up}$. Intuitively, $\mathbf{W}_\text{down}$ is sparsifiable as it corresponds to a Laplacian shaped distribution, which is more densely concentrated around zero than a Gaussian shaped distribution. $\mathbf{W}_\text{gate}$ may be more sparsifiable than $\mathbf{W}_\text{up}$ due to $\text{SiLU}$ decreasing the saliency of negative outputs.

\subsubsection{Prefill Sparsification}
\label{subsubsec:prefill}

We vary the proportion of prefill sparsified (along the sequence length dimension) in Figure \ref{fig:prefill-sparse}. Sparsifying the second half of prefill is nearly identical to sparsifying 99\% of prefill (all tokens besides the initial tokens). However, more severe degradation occurs when sparsifying the initial tokens. This is due to attention sinks \citep{xiao2024efficientstreaminglanguagemodels}, a phenomenon in LLMs where initial tokens are allocated an outsized amount of attention due to the softmax operation. Degradation to keys and values of initial ``attention sink'' tokens results in more substantial model degradation due to their greater importance \citep{hooper2024kvquant10millioncontext}. 

\oursmethod is a decoding solution so this is typically not an issue, but care must be taken when sparsifying prefill for evaluation on log-likelihood based tasks.

\begin{figure}[htb]
    \centering
    \begin{minipage}[b]{0.48\textwidth}
        \centering
        \includegraphics[width=0.9\textwidth]{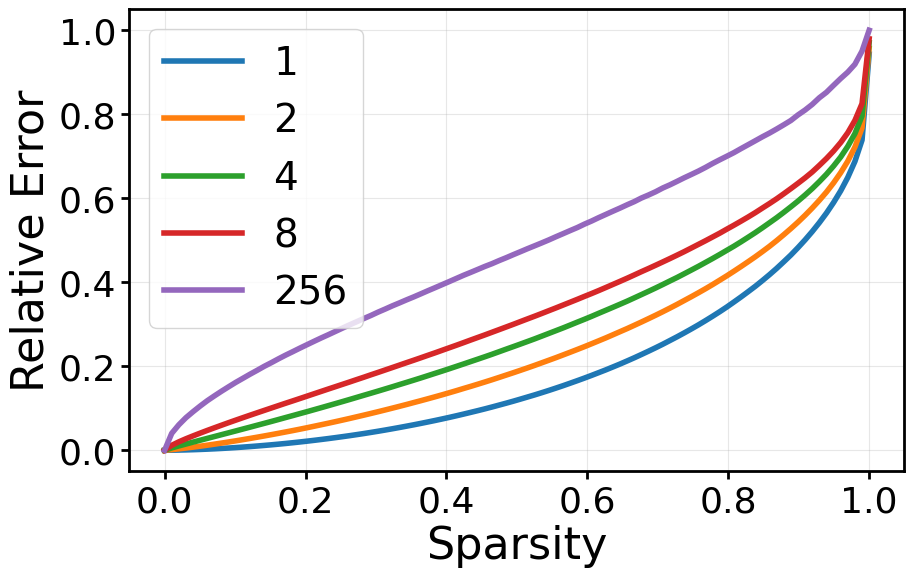}
        \caption{Layer-level activation error for $\mathbf{W}_\text{down}$ at Block 16 of Llama-2-7B, at varying batch sizes.}
        \label{fig:batched}
    \end{minipage}
    \hfill
    \begin{minipage}[b]{0.48\textwidth}
        \centering
        \includegraphics[width=0.9\textwidth]{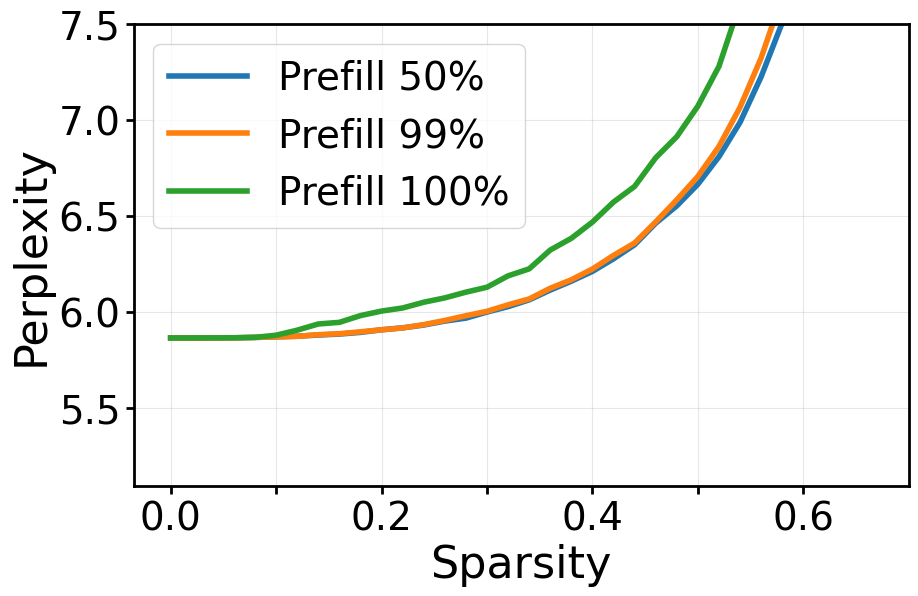}
        \caption{Perplexity of Llama-3-8B on WikiText, varying the proportion of prefill sparsified, using greedy sparsity configurations.}
        \label{fig:prefill-sparse}
    \end{minipage}

\end{figure}

\subsubsection{Batched Sparsification}
\label{subsubsec:batch}

We focus on the single-batch case, but it may be valuable to study activation sparsity in batched settings.
The key challenge is that different inputs may prefer different sparsity patterns. We need to find a subset of weight columns associated with activations that are relatively low-magnitude for the entire batch.

We propose to sparsify based on the average magnitude of activations across the batch dimension, a natural extension from the single batch case. The resultant sparsification criterion is batch dependent, but is still entirely characterized by a threshold.

As a preliminary analysis, we find the layer-level activation error for $\mathbf{W}_\text{down}$ at Block 16 of Llama-2-7B, ablated across batch sizes, in Figure \ref{fig:batched}. At low batch sizes above 1, $\mathbf{W}_\text{down}$ still exhibits substantial sparsity. For example, in the single batch setting,  $\mathbf{W}_\text{down}$ is assigned roughly 60\% sparsity at 50\% model-wide sparsity. To have the same error at batch size 4, $\mathbf{W}_\text{down}$ is assigned roughly 38\% sparsity. 
As batch size tends to infinity, \oursmethod can be interpreted as a structured channel-wise pruning algorithm~\citep{zhao2023adaptiveactivationbasedstructuredpruning}, with a simple pruning metric based on activation magnitude.

\section{Applications and Limitations}
\paragraph{Applications.} The most immediate application of \oursmethod is accelerating inference in resource constrained edge settings. These settings are typically single-batch, which is where \oursmethod realizes the most salient speed-up. Furthermore, \oursmethod is compatible with quantization (Section \ref{subsec:quantization}), which is another essential axis of efficiency in this setting.

\paragraph{Limitations.}  \oursmethod exhibits substantial sparsity in the low-batch setting (Section \ref{subsubsec:batch}) but does not scale as well to higher batch sizes, which is a limitation of most activation sparsity work\footnote{We note that \citet{wang2024qsparselargelanguagemodels} propose to enforce structured n:m sparsity on activations to address batched inference, but this is applicable only if inference is compute bound instead of memory bound, and is outside the scope of our work. A regime where inference is compute bound is with 1.58-bit models \citep{ma2024era1bitllmslarge} in high-batch settings.}. A way to alleviate this is to push sparsities higher through continued pretraining.
While \oursmethod focuses on the training-free case, we provide many learnings that can aid future work in sparse aware adaptation.

A setting where batched inference is less difficult is in the low-batch setting of Mixture of Experts \citep{shazeer2017outrageouslylargeneuralnetworks} based models,
as the baseline itself does not scale well due to having to activate more experts and lowering the arithmetic intensity.

\section{Conclusion}

We propose \oursmethod, a simple method that applies magnitude-based activation sparsity to modern LLMs without training, achieving 40-50\% model-wide sparsity with minimal degradation. We additionally optimize per-layer sparsity levels, improve existing sparse kernels, and demonstrate compatibility with quantization. We achieve wall-clock speed-ups in single-batch decoding, which is crucial in resource-constrained edge settings. We hope \oursmethod has impact in real-world applications and enhances our understanding of activation sparsity in LLMs.

\subsubsection*{Acknowledgments}
We thank Neil Movva, Jue Wang, and Yucheng Lu for helpful comments and discussion. 

\bibliography{iclr2025_conference}

\begin{thebibliography}{78}
\providecommand{\natexlab}[1]{#1}
\providecommand{\url}[1]{\texttt{#1}}
\expandafter\ifx\csname urlstyle\endcsname\relax
  \providecommand{\doi}[1]{doi: #1}\else
  \providecommand{\doi}{doi: \begingroup \urlstyle{rm}\Url}\fi

\bibitem[Ainslie et~al.(2023)Ainslie, Lee-Thorp, de~Jong, Zemlyanskiy, Lebrón, and Sanghai]{ainslie2023gqatraininggeneralizedmultiquery}
Joshua Ainslie, James Lee-Thorp, Michiel de~Jong, Yury Zemlyanskiy, Federico Lebrón, and Sumit Sanghai.
\newblock Gqa: Training generalized multi-query transformer models from multi-head checkpoints, 2023.
\newblock URL \url{https://arxiv.org/abs/2305.13245}.

\bibitem[Alizadeh et~al.(2024)Alizadeh, Mirzadeh, Belenko, Khatamifard, Cho, Mundo, Rastegari, and Farajtabar]{alizadeh2024llmflashefficientlarge}
Keivan Alizadeh, Iman Mirzadeh, Dmitry Belenko, Karen Khatamifard, Minsik Cho, Carlo C~Del Mundo, Mohammad Rastegari, and Mehrdad Farajtabar.
\newblock Llm in a flash: Efficient large language model inference with limited memory, 2024.
\newblock URL \url{https://arxiv.org/abs/2312.11514}.

\bibitem[Ba et~al.(2016)Ba, Kiros, and Hinton]{ba2016layernormalization}
Jimmy~Lei Ba, Jamie~Ryan Kiros, and Geoffrey~E. Hinton.
\newblock Layer normalization, 2016.
\newblock URL \url{https://arxiv.org/abs/1607.06450}.

\bibitem[Bengio et~al.(2016)Bengio, Bacon, Pineau, and Precup]{bengio2016conditionalcomputationneuralnetworks}
Emmanuel Bengio, Pierre-Luc Bacon, Joelle Pineau, and Doina Precup.
\newblock Conditional computation in neural networks for faster models, 2016.
\newblock URL \url{https://arxiv.org/abs/1511.06297}.

\bibitem[Bengio(2013)]{bengio2013deeplearningrepresentationslooking}
Yoshua Bengio.
\newblock Deep learning of representations: Looking forward, 2013.
\newblock URL \url{https://arxiv.org/abs/1305.0445}.

\bibitem[Bengio et~al.(2013)Bengio, Léonard, and Courville]{bengio2013estimatingpropagatinggradientsstochastic}
Yoshua Bengio, Nicholas Léonard, and Aaron Courville.
\newblock Estimating or propagating gradients through stochastic neurons for conditional computation, 2013.
\newblock URL \url{https://arxiv.org/abs/1308.3432}.

\bibitem[Bisk et~al.(2019)Bisk, Zellers, Bras, Gao, and Choi]{bisk2019piqareasoningphysicalcommonsense}
Yonatan Bisk, Rowan Zellers, Ronan~Le Bras, Jianfeng Gao, and Yejin Choi.
\newblock Piqa: Reasoning about physical commonsense in natural language, 2019.
\newblock URL \url{https://arxiv.org/abs/1911.11641}.

\bibitem[Brown et~al.(2020)Brown, Mann, Ryder, Subbiah, Kaplan, Dhariwal, Neelakantan, Shyam, Sastry, Askell, Agarwal, Herbert-Voss, Krueger, Henighan, Child, Ramesh, Ziegler, Wu, Winter, Hesse, Chen, Sigler, Litwin, Gray, Chess, Clark, Berner, McCandlish, Radford, Sutskever, and Amodei]{brown2020languagemodelsfewshotlearners}
Tom~B. Brown, Benjamin Mann, Nick Ryder, Melanie Subbiah, Jared Kaplan, Prafulla Dhariwal, Arvind Neelakantan, Pranav Shyam, Girish Sastry, Amanda Askell, Sandhini Agarwal, Ariel Herbert-Voss, Gretchen Krueger, Tom Henighan, Rewon Child, Aditya Ramesh, Daniel~M. Ziegler, Jeffrey Wu, Clemens Winter, Christopher Hesse, Mark Chen, Eric Sigler, Mateusz Litwin, Scott Gray, Benjamin Chess, Jack Clark, Christopher Berner, Sam McCandlish, Alec Radford, Ilya Sutskever, and Dario Amodei.
\newblock Language models are few-shot learners, 2020.
\newblock URL \url{https://arxiv.org/abs/2005.14165}.

\bibitem[Chen et~al.(2023)Chen, Liu, Tang, Yi, Zhao, and Han]{chen2023sparsevitrevisitingactivationsparsity}
Xuanyao Chen, Zhijian Liu, Haotian Tang, Li~Yi, Hang Zhao, and Song Han.
\newblock Sparsevit: Revisiting activation sparsity for efficient high-resolution vision transformer, 2023.
\newblock URL \url{https://arxiv.org/abs/2303.17605}.

\bibitem[Clark et~al.(2022)Clark, de~las Casas, Guy, Mensch, Paganini, Hoffmann, Damoc, Hechtman, Cai, Borgeaud, van~den Driessche, Rutherford, Hennigan, Johnson, Millican, Cassirer, Jones, Buchatskaya, Budden, Sifre, Osindero, Vinyals, Rae, Elsen, Kavukcuoglu, and Simonyan]{clark2022unifiedscalinglawsrouted}
Aidan Clark, Diego de~las Casas, Aurelia Guy, Arthur Mensch, Michela Paganini, Jordan Hoffmann, Bogdan Damoc, Blake Hechtman, Trevor Cai, Sebastian Borgeaud, George van~den Driessche, Eliza Rutherford, Tom Hennigan, Matthew Johnson, Katie Millican, Albin Cassirer, Chris Jones, Elena Buchatskaya, David Budden, Laurent Sifre, Simon Osindero, Oriol Vinyals, Jack Rae, Erich Elsen, Koray Kavukcuoglu, and Karen Simonyan.
\newblock Unified scaling laws for routed language models, 2022.
\newblock URL \url{https://arxiv.org/abs/2202.01169}.

\bibitem[Clark et~al.(2018)Clark, Cowhey, Etzioni, Khot, Sabharwal, Schoenick, and Tafjord]{clark2018thinksolvedquestionanswering}
Peter Clark, Isaac Cowhey, Oren Etzioni, Tushar Khot, Ashish Sabharwal, Carissa Schoenick, and Oyvind Tafjord.
\newblock Think you have solved question answering? try arc, the ai2 reasoning challenge, 2018.
\newblock URL \url{https://arxiv.org/abs/1803.05457}.

\bibitem[Cobbe et~al.(2021)Cobbe, Kosaraju, Bavarian, Chen, Jun, Kaiser, Plappert, Tworek, Hilton, Nakano, Hesse, and Schulman]{cobbe2021trainingverifierssolvemath}
Karl Cobbe, Vineet Kosaraju, Mohammad Bavarian, Mark Chen, Heewoo Jun, Lukasz Kaiser, Matthias Plappert, Jerry Tworek, Jacob Hilton, Reiichiro Nakano, Christopher Hesse, and John Schulman.
\newblock Training verifiers to solve math word problems, 2021.
\newblock URL \url{https://arxiv.org/abs/2110.14168}.

\bibitem[Dettmers et~al.(2022)Dettmers, Lewis, Belkada, and Zettlemoyer]{dettmers2022llmint88bitmatrixmultiplication}
Tim Dettmers, Mike Lewis, Younes Belkada, and Luke Zettlemoyer.
\newblock Llm.int8(): 8-bit matrix multiplication for transformers at scale, 2022.
\newblock URL \url{https://arxiv.org/abs/2208.07339}.

\bibitem[Dettmers et~al.(2023{\natexlab{a}})Dettmers, Pagnoni, Holtzman, and Zettlemoyer]{dettmers2023qloraefficientfinetuningquantized}
Tim Dettmers, Artidoro Pagnoni, Ari Holtzman, and Luke Zettlemoyer.
\newblock Qlora: Efficient finetuning of quantized llms, 2023{\natexlab{a}}.
\newblock URL \url{https://arxiv.org/abs/2305.14314}.

\bibitem[Dettmers et~al.(2023{\natexlab{b}})Dettmers, Pagnoni, Holtzman, and Zettlemoyer]{dettmers2024qlora}
Tim Dettmers, Artidoro Pagnoni, Ari Holtzman, and Luke Zettlemoyer.
\newblock {QLoRA}: Efficient finetuning of quantized {LLMs}.
\newblock \emph{Advances in Neural Information Processing Systems}, 36, 2023{\natexlab{b}}.

\bibitem[Dettmers et~al.(2023{\natexlab{c}})Dettmers, Svirschevski, Egiazarian, Kuznedelev, Frantar, Ashkboos, Borzunov, Hoefler, and Alistarh]{dettmers2023spqr}
Tim Dettmers, Ruslan Svirschevski, Vage Egiazarian, Denis Kuznedelev, Elias Frantar, Saleh Ashkboos, Alexander Borzunov, Torsten Hoefler, and Dan Alistarh.
\newblock {SpQR}: A sparse-quantized representation for near-lossless {LLM} weight compression.
\newblock \emph{arXiv preprint arXiv:2306.03078}, 2023{\natexlab{c}}.

\bibitem[Dubey et~al.(2024)Dubey, Jauhri, Pandey, Kadian, Al-Dahle, Letman, Mathur, Schelten, Yang, Fan, Goyal, Hartshorn, Yang, Mitra, Sravankumar, Korenev, Hinsvark, Rao, Zhang, Rodriguez, Gregerson, Spataru, Roziere, Biron, Tang, Chern, Caucheteux, Nayak, Bi, Marra, McConnell, Keller, Touret, Wu, Wong, Ferrer, Nikolaidis, Allonsius, Song, Pintz, Livshits, Esiobu, Choudhary, Mahajan, Garcia-Olano, Perino, Hupkes, Lakomkin, AlBadawy, Lobanova, Dinan, Smith, Radenovic, Zhang, Synnaeve, Lee, Anderson, Nail, Mialon, Pang, Cucurell, Nguyen, Korevaar, Xu, Touvron, Zarov, Ibarra, Kloumann, Misra, Evtimov, Copet, Lee, Geffert, Vranes, Park, Mahadeokar, Shah, van~der Linde, Billock, Hong, Lee, Fu, Chi, Huang, Liu, Wang, Yu, Bitton, Spisak, Park, Rocca, Johnstun, Saxe, Jia, Alwala, Upasani, Plawiak, Li, Heafield, Stone, El-Arini, Iyer, Malik, Chiu, Bhalla, Rantala-Yeary, van~der Maaten, Chen, Tan, Jenkins, Martin, Madaan, Malo, Blecher, Landzaat, de~Oliveira, Muzzi, Pasupuleti, Singh, Paluri, Kardas, Oldham, Rita,
  Pavlova, Kambadur, Lewis, Si, Singh, Hassan, Goyal, Torabi, Bashlykov, Bogoychev, Chatterji, Duchenne, Çelebi, Alrassy, Zhang, Li, Vasic, Weng, Bhargava, Dubal, Krishnan, Koura, Xu, He, Dong, Srinivasan, Ganapathy, Calderer, Cabral, Stojnic, Raileanu, Girdhar, Patel, Sauvestre, Polidoro, Sumbaly, Taylor, Silva, Hou, Wang, Hosseini, Chennabasappa, Singh, Bell, Kim, Edunov, Nie, Narang, Raparthy, Shen, Wan, Bhosale, Zhang, Vandenhende, Batra, Whitman, Sootla, Collot, Gururangan, Borodinsky, Herman, Fowler, Sheasha, Georgiou, Scialom, Speckbacher, Mihaylov, Xiao, Karn, Goswami, Gupta, Ramanathan, Kerkez, Gonguet, Do, Vogeti, Petrovic, Chu, Xiong, Fu, Meers, Martinet, Wang, Tan, Xie, Jia, Wang, Goldschlag, Gaur, Babaei, Wen, Song, Zhang, Li, Mao, Coudert, Yan, Chen, Papakipos, Singh, Grattafiori, Jain, Kelsey, Shajnfeld, Gangidi, Victoria, Goldstand, Menon, Sharma, Boesenberg, Vaughan, Baevski, Feinstein, Kallet, Sangani, Yunus, Lupu, Alvarado, Caples, Gu, Ho, Poulton, Ryan, Ramchandani, Franco, Saraf,
  Chowdhury, Gabriel, Bharambe, Eisenman, Yazdan, James, Maurer, Leonhardi, Huang, Loyd, Paola, Paranjape, Liu, Wu, Ni, Hancock, Wasti, Spence, Stojkovic, Gamido, Montalvo, Parker, Burton, Mejia, Wang, Kim, Zhou, Hu, Chu, Cai, Tindal, Feichtenhofer, Civin, Beaty, Kreymer, Li, Wyatt, Adkins, Xu, Testuggine, David, Parikh, Liskovich, Foss, Wang, Le, Holland, Dowling, Jamil, Montgomery, Presani, Hahn, Wood, Brinkman, Arcaute, Dunbar, Smothers, Sun, Kreuk, Tian, Ozgenel, Caggioni, Guzmán, Kanayet, Seide, Florez, Schwarz, Badeer, Swee, Halpern, Thattai, Herman, Sizov, Guangyi, Zhang, Lakshminarayanan, Shojanazeri, Zou, Wang, Zha, Habeeb, Rudolph, Suk, Aspegren, Goldman, Molybog, Tufanov, Veliche, Gat, Weissman, Geboski, Kohli, Asher, Gaya, Marcus, Tang, Chan, Zhen, Reizenstein, Teboul, Zhong, Jin, Yang, Cummings, Carvill, Shepard, McPhie, Torres, Ginsburg, Wang, Wu, U, Saxena, Prasad, Khandelwal, Zand, Matosich, Veeraraghavan, Michelena, Li, Huang, Chawla, Lakhotia, Huang, Chen, Garg, A, Silva, Bell, Zhang, Guo,
  Yu, Moshkovich, Wehrstedt, Khabsa, Avalani, Bhatt, Tsimpoukelli, Mankus, Hasson, Lennie, Reso, Groshev, Naumov, Lathi, Keneally, Seltzer, Valko, Restrepo, Patel, Vyatskov, Samvelyan, Clark, Macey, Wang, Hermoso, Metanat, Rastegari, Bansal, Santhanam, Parks, White, Bawa, Singhal, Egebo, Usunier, Laptev, Dong, Zhang, Cheng, Chernoguz, Hart, Salpekar, Kalinli, Kent, Parekh, Saab, Balaji, Rittner, Bontrager, Roux, Dollar, Zvyagina, Ratanchandani, Yuvraj, Liang, Alao, Rodriguez, Ayub, Murthy, Nayani, Mitra, Li, Hogan, Battey, Wang, Maheswari, Howes, Rinott, Bondu, Datta, Chugh, Hunt, Dhillon, Sidorov, Pan, Verma, Yamamoto, Ramaswamy, Lindsay, Lindsay, Feng, Lin, Zha, Shankar, Zhang, Zhang, Wang, Agarwal, Sajuyigbe, Chintala, Max, Chen, Kehoe, Satterfield, Govindaprasad, Gupta, Cho, Virk, Subramanian, Choudhury, Goldman, Remez, Glaser, Best, Kohler, Robinson, Li, Zhang, Matthews, Chou, Shaked, Vontimitta, Ajayi, Montanez, Mohan, Kumar, Mangla, Ionescu, Poenaru, Mihailescu, Ivanov, Li, Wang, Jiang, Bouaziz,
  Constable, Tang, Wang, Wu, Wang, Xia, Wu, Gao, Chen, Hu, Jia, Qi, Li, Zhang, Zhang, Adi, Nam, Yu, Wang, Hao, Qian, He, Rait, DeVito, Rosnbrick, Wen, Yang, and Zhao]{dubey2024llama3herdmodels}
Abhimanyu Dubey, Abhinav Jauhri, Abhinav Pandey, Abhishek Kadian, Ahmad Al-Dahle, Aiesha Letman, Akhil Mathur, Alan Schelten, Amy Yang, Angela Fan, Anirudh Goyal, Anthony Hartshorn, Aobo Yang, Archi Mitra, Archie Sravankumar, Artem Korenev, Arthur Hinsvark, Arun Rao, Aston Zhang, Aurelien Rodriguez, Austen Gregerson, Ava Spataru, Baptiste Roziere, Bethany Biron, Binh Tang, Bobbie Chern, Charlotte Caucheteux, Chaya Nayak, Chloe Bi, Chris Marra, Chris McConnell, Christian Keller, Christophe Touret, Chunyang Wu, Corinne Wong, Cristian~Canton Ferrer, Cyrus Nikolaidis, Damien Allonsius, Daniel Song, Danielle Pintz, Danny Livshits, David Esiobu, Dhruv Choudhary, Dhruv Mahajan, Diego Garcia-Olano, Diego Perino, Dieuwke Hupkes, Egor Lakomkin, Ehab AlBadawy, Elina Lobanova, Emily Dinan, Eric~Michael Smith, Filip Radenovic, Frank Zhang, Gabriel Synnaeve, Gabrielle Lee, Georgia~Lewis Anderson, Graeme Nail, Gregoire Mialon, Guan Pang, Guillem Cucurell, Hailey Nguyen, Hannah Korevaar, Hu~Xu, Hugo Touvron, Iliyan Zarov,
  Imanol~Arrieta Ibarra, Isabel Kloumann, Ishan Misra, Ivan Evtimov, Jade Copet, Jaewon Lee, Jan Geffert, Jana Vranes, Jason Park, Jay Mahadeokar, Jeet Shah, Jelmer van~der Linde, Jennifer Billock, Jenny Hong, Jenya Lee, Jeremy Fu, Jianfeng Chi, Jianyu Huang, Jiawen Liu, Jie Wang, Jiecao Yu, Joanna Bitton, Joe Spisak, Jongsoo Park, Joseph Rocca, Joshua Johnstun, Joshua Saxe, Junteng Jia, Kalyan~Vasuden Alwala, Kartikeya Upasani, Kate Plawiak, Ke~Li, Kenneth Heafield, Kevin Stone, Khalid El-Arini, Krithika Iyer, Kshitiz Malik, Kuenley Chiu, Kunal Bhalla, Lauren Rantala-Yeary, Laurens van~der Maaten, Lawrence Chen, Liang Tan, Liz Jenkins, Louis Martin, Lovish Madaan, Lubo Malo, Lukas Blecher, Lukas Landzaat, Luke de~Oliveira, Madeline Muzzi, Mahesh Pasupuleti, Mannat Singh, Manohar Paluri, Marcin Kardas, Mathew Oldham, Mathieu Rita, Maya Pavlova, Melanie Kambadur, Mike Lewis, Min Si, Mitesh~Kumar Singh, Mona Hassan, Naman Goyal, Narjes Torabi, Nikolay Bashlykov, Nikolay Bogoychev, Niladri Chatterji, Olivier
  Duchenne, Onur Çelebi, Patrick Alrassy, Pengchuan Zhang, Pengwei Li, Petar Vasic, Peter Weng, Prajjwal Bhargava, Pratik Dubal, Praveen Krishnan, Punit~Singh Koura, Puxin Xu, Qing He, Qingxiao Dong, Ragavan Srinivasan, Raj Ganapathy, Ramon Calderer, Ricardo~Silveira Cabral, Robert Stojnic, Roberta Raileanu, Rohit Girdhar, Rohit Patel, Romain Sauvestre, Ronnie Polidoro, Roshan Sumbaly, Ross Taylor, Ruan Silva, Rui Hou, Rui Wang, Saghar Hosseini, Sahana Chennabasappa, Sanjay Singh, Sean Bell, Seohyun~Sonia Kim, Sergey Edunov, Shaoliang Nie, Sharan Narang, Sharath Raparthy, Sheng Shen, Shengye Wan, Shruti Bhosale, Shun Zhang, Simon Vandenhende, Soumya Batra, Spencer Whitman, Sten Sootla, Stephane Collot, Suchin Gururangan, Sydney Borodinsky, Tamar Herman, Tara Fowler, Tarek Sheasha, Thomas Georgiou, Thomas Scialom, Tobias Speckbacher, Todor Mihaylov, Tong Xiao, Ujjwal Karn, Vedanuj Goswami, Vibhor Gupta, Vignesh Ramanathan, Viktor Kerkez, Vincent Gonguet, Virginie Do, Vish Vogeti, Vladan Petrovic, Weiwei Chu,
  Wenhan Xiong, Wenyin Fu, Whitney Meers, Xavier Martinet, Xiaodong Wang, Xiaoqing~Ellen Tan, Xinfeng Xie, Xuchao Jia, Xuewei Wang, Yaelle Goldschlag, Yashesh Gaur, Yasmine Babaei, Yi~Wen, Yiwen Song, Yuchen Zhang, Yue Li, Yuning Mao, Zacharie~Delpierre Coudert, Zheng Yan, Zhengxing Chen, Zoe Papakipos, Aaditya Singh, Aaron Grattafiori, Abha Jain, Adam Kelsey, Adam Shajnfeld, Adithya Gangidi, Adolfo Victoria, Ahuva Goldstand, Ajay Menon, Ajay Sharma, Alex Boesenberg, Alex Vaughan, Alexei Baevski, Allie Feinstein, Amanda Kallet, Amit Sangani, Anam Yunus, Andrei Lupu, Andres Alvarado, Andrew Caples, Andrew Gu, Andrew Ho, Andrew Poulton, Andrew Ryan, Ankit Ramchandani, Annie Franco, Aparajita Saraf, Arkabandhu Chowdhury, Ashley Gabriel, Ashwin Bharambe, Assaf Eisenman, Azadeh Yazdan, Beau James, Ben Maurer, Benjamin Leonhardi, Bernie Huang, Beth Loyd, Beto~De Paola, Bhargavi Paranjape, Bing Liu, Bo~Wu, Boyu Ni, Braden Hancock, Bram Wasti, Brandon Spence, Brani Stojkovic, Brian Gamido, Britt Montalvo, Carl
  Parker, Carly Burton, Catalina Mejia, Changhan Wang, Changkyu Kim, Chao Zhou, Chester Hu, Ching-Hsiang Chu, Chris Cai, Chris Tindal, Christoph Feichtenhofer, Damon Civin, Dana Beaty, Daniel Kreymer, Daniel Li, Danny Wyatt, David Adkins, David Xu, Davide Testuggine, Delia David, Devi Parikh, Diana Liskovich, Didem Foss, Dingkang Wang, Duc Le, Dustin Holland, Edward Dowling, Eissa Jamil, Elaine Montgomery, Eleonora Presani, Emily Hahn, Emily Wood, Erik Brinkman, Esteban Arcaute, Evan Dunbar, Evan Smothers, Fei Sun, Felix Kreuk, Feng Tian, Firat Ozgenel, Francesco Caggioni, Francisco Guzmán, Frank Kanayet, Frank Seide, Gabriela~Medina Florez, Gabriella Schwarz, Gada Badeer, Georgia Swee, Gil Halpern, Govind Thattai, Grant Herman, Grigory Sizov, Guangyi, Zhang, Guna Lakshminarayanan, Hamid Shojanazeri, Han Zou, Hannah Wang, Hanwen Zha, Haroun Habeeb, Harrison Rudolph, Helen Suk, Henry Aspegren, Hunter Goldman, Igor Molybog, Igor Tufanov, Irina-Elena Veliche, Itai Gat, Jake Weissman, James Geboski, James Kohli,
  Japhet Asher, Jean-Baptiste Gaya, Jeff Marcus, Jeff Tang, Jennifer Chan, Jenny Zhen, Jeremy Reizenstein, Jeremy Teboul, Jessica Zhong, Jian Jin, Jingyi Yang, Joe Cummings, Jon Carvill, Jon Shepard, Jonathan McPhie, Jonathan Torres, Josh Ginsburg, Junjie Wang, Kai Wu, Kam~Hou U, Karan Saxena, Karthik Prasad, Kartikay Khandelwal, Katayoun Zand, Kathy Matosich, Kaushik Veeraraghavan, Kelly Michelena, Keqian Li, Kun Huang, Kunal Chawla, Kushal Lakhotia, Kyle Huang, Lailin Chen, Lakshya Garg, Lavender A, Leandro Silva, Lee Bell, Lei Zhang, Liangpeng Guo, Licheng Yu, Liron Moshkovich, Luca Wehrstedt, Madian Khabsa, Manav Avalani, Manish Bhatt, Maria Tsimpoukelli, Martynas Mankus, Matan Hasson, Matthew Lennie, Matthias Reso, Maxim Groshev, Maxim Naumov, Maya Lathi, Meghan Keneally, Michael~L. Seltzer, Michal Valko, Michelle Restrepo, Mihir Patel, Mik Vyatskov, Mikayel Samvelyan, Mike Clark, Mike Macey, Mike Wang, Miquel~Jubert Hermoso, Mo~Metanat, Mohammad Rastegari, Munish Bansal, Nandhini Santhanam, Natascha
  Parks, Natasha White, Navyata Bawa, Nayan Singhal, Nick Egebo, Nicolas Usunier, Nikolay~Pavlovich Laptev, Ning Dong, Ning Zhang, Norman Cheng, Oleg Chernoguz, Olivia Hart, Omkar Salpekar, Ozlem Kalinli, Parkin Kent, Parth Parekh, Paul Saab, Pavan Balaji, Pedro Rittner, Philip Bontrager, Pierre Roux, Piotr Dollar, Polina Zvyagina, Prashant Ratanchandani, Pritish Yuvraj, Qian Liang, Rachad Alao, Rachel Rodriguez, Rafi Ayub, Raghotham Murthy, Raghu Nayani, Rahul Mitra, Raymond Li, Rebekkah Hogan, Robin Battey, Rocky Wang, Rohan Maheswari, Russ Howes, Ruty Rinott, Sai~Jayesh Bondu, Samyak Datta, Sara Chugh, Sara Hunt, Sargun Dhillon, Sasha Sidorov, Satadru Pan, Saurabh Verma, Seiji Yamamoto, Sharadh Ramaswamy, Shaun Lindsay, Shaun Lindsay, Sheng Feng, Shenghao Lin, Shengxin~Cindy Zha, Shiva Shankar, Shuqiang Zhang, Shuqiang Zhang, Sinong Wang, Sneha Agarwal, Soji Sajuyigbe, Soumith Chintala, Stephanie Max, Stephen Chen, Steve Kehoe, Steve Satterfield, Sudarshan Govindaprasad, Sumit Gupta, Sungmin Cho, Sunny
  Virk, Suraj Subramanian, Sy~Choudhury, Sydney Goldman, Tal Remez, Tamar Glaser, Tamara Best, Thilo Kohler, Thomas Robinson, Tianhe Li, Tianjun Zhang, Tim Matthews, Timothy Chou, Tzook Shaked, Varun Vontimitta, Victoria Ajayi, Victoria Montanez, Vijai Mohan, Vinay~Satish Kumar, Vishal Mangla, Vlad Ionescu, Vlad Poenaru, Vlad~Tiberiu Mihailescu, Vladimir Ivanov, Wei Li, Wenchen Wang, Wenwen Jiang, Wes Bouaziz, Will Constable, Xiaocheng Tang, Xiaofang Wang, Xiaojian Wu, Xiaolan Wang, Xide Xia, Xilun Wu, Xinbo Gao, Yanjun Chen, Ye~Hu, Ye~Jia, Ye~Qi, Yenda Li, Yilin Zhang, Ying Zhang, Yossi Adi, Youngjin Nam, Yu, Wang, Yuchen Hao, Yundi Qian, Yuzi He, Zach Rait, Zachary DeVito, Zef Rosnbrick, Zhaoduo Wen, Zhenyu Yang, and Zhiwei Zhao.
\newblock The llama 3 herd of models, 2024.
\newblock URL \url{https://arxiv.org/abs/2407.21783}.

\bibitem[Egiazarian et~al.(2024)Egiazarian, Panferov, Kuznedelev, Frantar, Babenko, and Alistarh]{egiazarian2024extreme}
Vage Egiazarian, Andrei Panferov, Denis Kuznedelev, Elias Frantar, Artem Babenko, and Dan Alistarh.
\newblock Extreme compression of large language models via additive quantization.
\newblock \emph{arXiv preprint arXiv:2401.06118}, 2024.

\bibitem[Fang et~al.(2024)Fang, Yin, Muralidharan, Heinrich, Pool, Kautz, Molchanov, and Wang]{fang2024maskllmlearnablesemistructuredsparsity}
Gongfan Fang, Hongxu Yin, Saurav Muralidharan, Greg Heinrich, Jeff Pool, Jan Kautz, Pavlo Molchanov, and Xinchao Wang.
\newblock Maskllm: Learnable semi-structured sparsity for large language models, 2024.
\newblock URL \url{https://arxiv.org/abs/2409.17481}.

\bibitem[Fedus et~al.(2022)Fedus, Zoph, and Shazeer]{fedus2022switchtransformersscalingtrillion}
William Fedus, Barret Zoph, and Noam Shazeer.
\newblock Switch transformers: Scaling to trillion parameter models with simple and efficient sparsity, 2022.
\newblock URL \url{https://arxiv.org/abs/2101.03961}.

\bibitem[Frantar \& Alistarh(2023)Frantar and Alistarh]{frantar2023sparsegpt}
Elias Frantar and Dan Alistarh.
\newblock Sparsegpt: Massive language models can be accurately pruned in one-shot.
\newblock In \emph{International Conference on Machine Learning}, pp.\  10323--10337. PMLR, 2023.

\bibitem[Frantar et~al.(2022)Frantar, Ashkboos, Hoefler, and Alistarh]{frantar-gptq}
Elias Frantar, Saleh Ashkboos, Torsten Hoefler, and Dan Alistarh.
\newblock {GPTQ}: Accurate post-training compression for generative pretrained transformers.
\newblock \emph{arXiv preprint arXiv:2210.17323}, 2022.

\bibitem[Frantar et~al.(2024)Frantar, Castro, Chen, Hoefler, and Alistarh]{frantar2024marlin}
Elias Frantar, Roberto~L. Castro, Jiale Chen, Torsten Hoefler, and Dan Alistarh.
\newblock {MARLIN}: Mixed-precision auto-regressive parallel inference on large language models.
\newblock \emph{arXiv preprint arXiv:2408.11743}, 2024.

\bibitem[Gao et~al.(2023)Gao, Tow, Abbasi, Biderman, Black, DiPofi, Foster, Golding, Hsu, Le~Noac'h, Li, McDonell, Muennighoff, Ociepa, Phang, Reynolds, Schoelkopf, Skowron, Sutawika, Tang, Thite, Wang, Wang, and Zou]{eval-harness}
Leo Gao, Jonathan Tow, Baber Abbasi, Stella Biderman, Sid Black, Anthony DiPofi, Charles Foster, Laurence Golding, Jeffrey Hsu, Alain Le~Noac'h, Haonan Li, Kyle McDonell, Niklas Muennighoff, Chris Ociepa, Jason Phang, Laria Reynolds, Hailey Schoelkopf, Aviya Skowron, Lintang Sutawika, Eric Tang, Anish Thite, Ben Wang, Kevin Wang, and Andy Zou.
\newblock A framework for few-shot language model evaluation, 12 2023.
\newblock URL \url{https://zenodo.org/records/10256836}.

\bibitem[Gholami et~al.(2024)Gholami, Yao, Kim, Hooper, Mahoney, and Keutzer]{gholami2024aimemorywall}
Amir Gholami, Zhewei Yao, Sehoon Kim, Coleman Hooper, Michael~W. Mahoney, and Kurt Keutzer.
\newblock Ai and memory wall, 2024.
\newblock URL \url{https://arxiv.org/abs/2403.14123}.

\bibitem[Guo et~al.(2024)Guo, Brandon, Cholakov, Ragan-Kelley, Xing, and Kim]{guo2024fast}
Han Guo, William Brandon, Radostin Cholakov, Jonathan Ragan-Kelley, Eric~P Xing, and Yoon Kim.
\newblock Fast matrix multiplications for lookup table-quantized llms.
\newblock \emph{arXiv preprint arXiv:2407.10960}, 2024.

\bibitem[Hendrycks et~al.(2021)Hendrycks, Burns, Basart, Zou, Mazeika, Song, and Steinhardt]{hendrycks2021measuringmassivemultitasklanguage}
Dan Hendrycks, Collin Burns, Steven Basart, Andy Zou, Mantas Mazeika, Dawn Song, and Jacob Steinhardt.
\newblock Measuring massive multitask language understanding, 2021.
\newblock URL \url{https://arxiv.org/abs/2009.03300}.

\bibitem[Hong et~al.(2024)Hong, Dai, Xu, Mao, Li, Liu, Chen, Dong, and Wang]{hong2024flashdecodingfasterlargelanguage}
Ke~Hong, Guohao Dai, Jiaming Xu, Qiuli Mao, Xiuhong Li, Jun Liu, Kangdi Chen, Yuhan Dong, and Yu~Wang.
\newblock Flashdecoding++: Faster large language model inference on gpus, 2024.
\newblock URL \url{https://arxiv.org/abs/2311.01282}.

\bibitem[Hooper et~al.(2024)Hooper, Kim, Mohammadzadeh, Mahoney, Shao, Keutzer, and Gholami]{hooper2024kvquant10millioncontext}
Coleman Hooper, Sehoon Kim, Hiva Mohammadzadeh, Michael~W. Mahoney, Yakun~Sophia Shao, Kurt Keutzer, and Amir Gholami.
\newblock Kvquant: Towards 10 million context length llm inference with kv cache quantization, 2024.
\newblock URL \url{https://arxiv.org/abs/2401.18079}.

\bibitem[Hu et~al.(2021)Hu, Shen, Wallis, Allen-Zhu, Li, Wang, Wang, and Chen]{hu2021loralowrankadaptationlarge}
Edward~J. Hu, Yelong Shen, Phillip Wallis, Zeyuan Allen-Zhu, Yuanzhi Li, Shean Wang, Lu~Wang, and Weizhu Chen.
\newblock Lora: Low-rank adaptation of large language models, 2021.
\newblock URL \url{https://arxiv.org/abs/2106.09685}.

\bibitem[Huang et~al.(2024)Huang, Zheng, Ma, Qin, Lv, Chen, Luo, Qi, Liu, and Magno]{huang2024empiricalstudyllama3quantization}
Wei Huang, Xingyu Zheng, Xudong Ma, Haotong Qin, Chengtao Lv, Hong Chen, Jie Luo, Xiaojuan Qi, Xianglong Liu, and Michele Magno.
\newblock An empirical study of llama3 quantization: From llms to mllms, 2024.
\newblock URL \url{https://arxiv.org/abs/2404.14047}.

\bibitem[Jiang et~al.(2023)Jiang, Sablayrolles, Mensch, Bamford, Chaplot, de~las Casas, Bressand, Lengyel, Lample, Saulnier, Lavaud, Lachaux, Stock, Scao, Lavril, Wang, Lacroix, and Sayed]{jiang2023mistral7b}
Albert~Q. Jiang, Alexandre Sablayrolles, Arthur Mensch, Chris Bamford, Devendra~Singh Chaplot, Diego de~las Casas, Florian Bressand, Gianna Lengyel, Guillaume Lample, Lucile Saulnier, Lélio~Renard Lavaud, Marie-Anne Lachaux, Pierre Stock, Teven~Le Scao, Thibaut Lavril, Thomas Wang, Timothée Lacroix, and William~El Sayed.
\newblock Mistral 7b, 2023.
\newblock URL \url{https://arxiv.org/abs/2310.06825}.

\bibitem[Kim et~al.(2023)Kim, Hooper, Gholami, Dong, Li, Shen, Mahoney, and Keutzer]{kim2023squeezellm}
Sehoon Kim, Coleman Hooper, Amir Gholami, Zhen Dong, Xiuyu Li, Sheng Shen, Michael~W Mahoney, and Kurt Keutzer.
\newblock {SqueezeLLM}: Dense-and-sparse quantization.
\newblock \emph{arXiv preprint arXiv:2306.07629}, 2023.

\bibitem[Kurtz et~al.(2020)Kurtz, Kopinsky, Gelashvili, Matveev, Carr, Goin, Leiserson, Moore, Shavit, and Alistarh]{pmlr-v119-kurtz20a}
Mark Kurtz, Justin Kopinsky, Rati Gelashvili, Alexander Matveev, John Carr, Michael Goin, William Leiserson, Sage Moore, Nir Shavit, and Dan Alistarh.
\newblock Inducing and exploiting activation sparsity for fast inference on deep neural networks.
\newblock In Hal~Daumé III and Aarti Singh (eds.), \emph{Proceedings of the 37th International Conference on Machine Learning}, volume 119 of \emph{Proceedings of Machine Learning Research}, pp.\  5533--5543. PMLR, 13--18 Jul 2020.
\newblock URL \url{https://proceedings.mlr.press/v119/kurtz20a.html}.

\bibitem[Lee et~al.(2024{\natexlab{a}})Lee, Lee, Zhang, Tiwari, and Mirhoseini]{lee2024cats}
Donghyun Lee, Jaeyong Lee, Genghan Zhang, Mo~Tiwari, and Azalia Mirhoseini.
\newblock {CATS}: Context-aware thresholding for sparsity in large language models.
\newblock In \emph{First Conference on Language Modeling}, 2024{\natexlab{a}}.
\newblock URL \url{https://openreview.net/forum?id=v3w2a7EInO}.

\bibitem[Lee et~al.(2024{\natexlab{b}})Lee, Lee, Zhang, Tiwari, and Mirhoseini]{lee2024catscontextuallyawarethresholdingsparsity}
Je-Yong Lee, Donghyun Lee, Genghan Zhang, Mo~Tiwari, and Azalia Mirhoseini.
\newblock Cats: Contextually-aware thresholding for sparsity in large language models, 2024{\natexlab{b}}.
\newblock URL \url{https://arxiv.org/abs/2404.08763}.

\bibitem[Li et~al.(2023)Li, You, Bhojanapalli, Li, Rawat, Reddi, Ye, Chern, Yu, Guo, and Kumar]{li2023lazyneuronphenomenonemergence}
Zonglin Li, Chong You, Srinadh Bhojanapalli, Daliang Li, Ankit~Singh Rawat, Sashank~J. Reddi, Ke~Ye, Felix Chern, Felix Yu, Ruiqi Guo, and Sanjiv Kumar.
\newblock The lazy neuron phenomenon: On emergence of activation sparsity in transformers, 2023.
\newblock URL \url{https://arxiv.org/abs/2210.06313}.

\bibitem[Lin et~al.(2024)Lin, Tang, Tang, Yang, Chen, Wang, Xiao, Dang, Gan, and Han]{lin2024awqactivationawareweightquantization}
Ji~Lin, Jiaming Tang, Haotian Tang, Shang Yang, Wei-Ming Chen, Wei-Chen Wang, Guangxuan Xiao, Xingyu Dang, Chuang Gan, and Song Han.
\newblock Awq: Activation-aware weight quantization for llm compression and acceleration, 2024.
\newblock URL \url{https://arxiv.org/abs/2306.00978}.

\bibitem[Liu et~al.(2024)Liu, Xiao, Li, Lee, Han, Dao, and Cai]{liu2024bitdeltafinetuneworthbit}
James Liu, Guangxuan Xiao, Kai Li, Jason~D. Lee, Song Han, Tri Dao, and Tianle Cai.
\newblock Bitdelta: Your fine-tune may only be worth one bit, 2024.
\newblock URL \url{https://arxiv.org/abs/2402.10193}.

\bibitem[Liu et~al.(2023)Liu, Wang, Dao, Zhou, Yuan, Song, Shrivastava, Zhang, Tian, Re, and Chen]{liu2023dejavucontextualsparsity}
Zichang Liu, Jue Wang, Tri Dao, Tianyi Zhou, Binhang Yuan, Zhao Song, Anshumali Shrivastava, Ce~Zhang, Yuandong Tian, Christopher Re, and Beidi Chen.
\newblock Deja vu: Contextual sparsity for efficient llms at inference time, 2023.
\newblock URL \url{https://arxiv.org/abs/2310.17157}.

\bibitem[Ma et~al.(2024)Ma, Wang, Ma, Wang, Wang, Huang, Dong, Wang, Xue, and Wei]{ma2024era1bitllmslarge}
Shuming Ma, Hongyu Wang, Lingxiao Ma, Lei Wang, Wenhui Wang, Shaohan Huang, Li~Dong, Ruiping Wang, Jilong Xue, and Furu Wei.
\newblock The era of 1-bit llms: All large language models are in 1.58 bits, 2024.
\newblock URL \url{https://arxiv.org/abs/2402.17764}.

\bibitem[Ma et~al.(2023)Ma, Fang, and Wang]{ma2023llm}
Xinyin Ma, Gongfan Fang, and Xinchao Wang.
\newblock Llm-pruner: On the structural pruning of large language models.
\newblock \emph{Advances in neural information processing systems}, 36:\penalty0 21702--21720, 2023.

\bibitem[Mattei(2017)]{mattei2017multiplyinggaussianmatrixgaussian}
Pierre-Alexandre Mattei.
\newblock Multiplying a gaussian matrix by a gaussian vector, 2017.
\newblock URL \url{https://arxiv.org/abs/1702.02815}.

\bibitem[Merity et~al.(2016)Merity, Xiong, Bradbury, and Socher]{merity2016pointersentinelmixturemodels}
Stephen Merity, Caiming Xiong, James Bradbury, and Richard Socher.
\newblock Pointer sentinel mixture models, 2016.
\newblock URL \url{https://arxiv.org/abs/1609.07843}.

\bibitem[Mirzadeh et~al.(2023)Mirzadeh, Alizadeh, Mehta, Mundo, Tuzel, Samei, Rastegari, and Farajtabar]{mirzadeh2023relustrikesbackexploiting}
Iman Mirzadeh, Keivan Alizadeh, Sachin Mehta, Carlo C~Del Mundo, Oncel Tuzel, Golnoosh Samei, Mohammad Rastegari, and Mehrdad Farajtabar.
\newblock Relu strikes back: Exploiting activation sparsity in large language models, 2023.
\newblock URL \url{https://arxiv.org/abs/2310.04564}.

\bibitem[Nrusimha et~al.(2024)Nrusimha, Mishra, Wang, Alistarh, Panda, and Kim]{nrusimha2024mitigating}
Aniruddha Nrusimha, Mayank Mishra, Naigang Wang, Dan Alistarh, Rameswar Panda, and Yoon Kim.
\newblock Mitigating the impact of outlier channels for language model quantization with activation regularization.
\newblock \emph{arXiv preprint arXiv:2404.03605}, 2024.

\bibitem[Poli et~al.(2023)Poli, Massaroli, Nguyen, Fu, Dao, Baccus, Bengio, Ermon, and Ré]{poli2023hyenahierarchylargerconvolutional}
Michael Poli, Stefano Massaroli, Eric Nguyen, Daniel~Y. Fu, Tri Dao, Stephen Baccus, Yoshua Bengio, Stefano Ermon, and Christopher Ré.
\newblock Hyena hierarchy: Towards larger convolutional language models, 2023.
\newblock URL \url{https://arxiv.org/abs/2302.10866}.

\bibitem[PyTorch(2024)]{gptfast}
Team PyTorch.
\newblock Accelerating generative ai with pytorch ii: Gpt, fast, 2024.
\newblock URL \url{https://pytorch.org/blog/accelerating-generative-ai-2/}.

\bibitem[Raffel et~al.(2023)Raffel, Shazeer, Roberts, Lee, Narang, Matena, Zhou, Li, and Liu]{raffel2023exploringlimitstransferlearning}
Colin Raffel, Noam Shazeer, Adam Roberts, Katherine Lee, Sharan Narang, Michael Matena, Yanqi Zhou, Wei Li, and Peter~J. Liu.
\newblock Exploring the limits of transfer learning with a unified text-to-text transformer, 2023.
\newblock URL \url{https://arxiv.org/abs/1910.10683}.

\bibitem[Raihan \& Aamodt(2020)Raihan and Aamodt]{raihan2020sparseweightactivationtraining}
Md~Aamir Raihan and Tor~M. Aamodt.
\newblock Sparse weight activation training, 2020.
\newblock URL \url{https://arxiv.org/abs/2001.01969}.

\bibitem[Sakaguchi et~al.(2019)Sakaguchi, Bras, Bhagavatula, and Choi]{sakaguchi2019winograndeadversarialwinogradschema}
Keisuke Sakaguchi, Ronan~Le Bras, Chandra Bhagavatula, and Yejin Choi.
\newblock Winogrande: An adversarial winograd schema challenge at scale, 2019.
\newblock URL \url{https://arxiv.org/abs/1907.10641}.

\bibitem[Shao et~al.(2023)Shao, Chen, Zhang, Xu, Zhao, Li, Zhang, Gao, Qiao, and Luo]{shao2023omniquant}
Wenqi Shao, Mengzhao Chen, Zhaoyang Zhang, Peng Xu, Lirui Zhao, Zhiqian Li, Kaipeng Zhang, Peng Gao, Yu~Qiao, and Ping Luo.
\newblock {OmniQuant}: Omnidirectionally calibrated quantization for large language models.
\newblock \emph{arXiv preprint arXiv:2308.13137}, 2023.

\bibitem[Shazeer(2020)]{shazeer2020gluvariantsimprovetransformer}
Noam Shazeer.
\newblock Glu variants improve transformer, 2020.
\newblock URL \url{https://arxiv.org/abs/2002.05202}.

\bibitem[Shazeer et~al.(2017)Shazeer, Mirhoseini, Maziarz, Davis, Le, Hinton, and Dean]{shazeer2017outrageouslylargeneuralnetworks}
Noam Shazeer, Azalia Mirhoseini, Krzysztof Maziarz, Andy Davis, Quoc Le, Geoffrey Hinton, and Jeff Dean.
\newblock Outrageously large neural networks: The sparsely-gated mixture-of-experts layer, 2017.
\newblock URL \url{https://arxiv.org/abs/1701.06538}.

\bibitem[So et~al.(2022)So, Mańke, Liu, Dai, Shazeer, and Le]{so2022primersearchingefficienttransformers}
David~R. So, Wojciech Mańke, Hanxiao Liu, Zihang Dai, Noam Shazeer, and Quoc~V. Le.
\newblock Primer: Searching for efficient transformers for language modeling, 2022.
\newblock URL \url{https://arxiv.org/abs/2109.08668}.

\bibitem[Song et~al.(2024{\natexlab{a}})Song, Han, Zhang, Hu, Shi, Li, Chen, Liu, Li, Yang, and Sun]{song2024prosparseintroducingenhancingintrinsic}
Chenyang Song, Xu~Han, Zhengyan Zhang, Shengding Hu, Xiyu Shi, Kuai Li, Chen Chen, Zhiyuan Liu, Guangli Li, Tao Yang, and Maosong Sun.
\newblock Prosparse: Introducing and enhancing intrinsic activation sparsity within large language models, 2024{\natexlab{a}}.
\newblock URL \url{https://arxiv.org/abs/2402.13516}.

\bibitem[Song et~al.(2023)Song, Mi, Xie, and Chen]{song2023powerinferfastlargelanguage}
Yixin Song, Zeyu Mi, Haotong Xie, and Haibo Chen.
\newblock Powerinfer: Fast large language model serving with a consumer-grade gpu, 2023.
\newblock URL \url{https://arxiv.org/abs/2312.12456}.

\bibitem[Song et~al.(2024{\natexlab{b}})Song, Xie, Zhang, Wen, Ma, Mi, and Chen]{song2024turbosparseachievingllm}
Yixin Song, Haotong Xie, Zhengyan Zhang, Bo~Wen, Li~Ma, Zeyu Mi, and Haibo Chen.
\newblock Turbo sparse: Achieving llm sota performance with minimal activated parameters, 2024{\natexlab{b}}.
\newblock URL \url{https://arxiv.org/abs/2406.05955}.

\bibitem[Szatkowski et~al.(2024)Szatkowski, Wójcik, Piórczyński, and Scardapane]{szatkowski2024exploitingactivationsparsitydense}
Filip Szatkowski, Bartosz Wójcik, Mikołaj Piórczyński, and Simone Scardapane.
\newblock Exploiting activation sparsity with dense to dynamic-k mixture-of-experts conversion, 2024.
\newblock URL \url{https://arxiv.org/abs/2310.04361}.

\bibitem[Tillet et~al.(2019)Tillet, Kung, and Cox]{triton}
Philippe Tillet, H.~T. Kung, and David Cox.
\newblock Triton: an intermediate language and compiler for tiled neural network computations.
\newblock In \emph{Proceedings of the 3rd ACM SIGPLAN International Workshop on Machine Learning and Programming Languages}, MAPL 2019, pp.\  10–19, New York, NY, USA, 2019. Association for Computing Machinery.
\newblock ISBN 9781450367196.
\newblock \doi{10.1145/3315508.3329973}.
\newblock URL \url{https://doi.org/10.1145/3315508.3329973}.

\bibitem[Touvron et~al.(2023)Touvron, Martin, Stone, Albert, Almahairi, Babaei, Bashlykov, Batra, Bhargava, Bhosale, Bikel, Blecher, Ferrer, Chen, Cucurull, Esiobu, Fernandes, Fu, Fu, Fuller, Gao, Goswami, Goyal, Hartshorn, Hosseini, Hou, Inan, Kardas, Kerkez, Khabsa, Kloumann, Korenev, Koura, Lachaux, Lavril, Lee, Liskovich, Lu, Mao, Martinet, Mihaylov, Mishra, Molybog, Nie, Poulton, Reizenstein, Rungta, Saladi, Schelten, Silva, Smith, Subramanian, Tan, Tang, Taylor, Williams, Kuan, Xu, Yan, Zarov, Zhang, Fan, Kambadur, Narang, Rodriguez, Stojnic, Edunov, and Scialom]{touvron2023llama2openfoundation}
Hugo Touvron, Louis Martin, Kevin Stone, Peter Albert, Amjad Almahairi, Yasmine Babaei, Nikolay Bashlykov, Soumya Batra, Prajjwal Bhargava, Shruti Bhosale, Dan Bikel, Lukas Blecher, Cristian~Canton Ferrer, Moya Chen, Guillem Cucurull, David Esiobu, Jude Fernandes, Jeremy Fu, Wenyin Fu, Brian Fuller, Cynthia Gao, Vedanuj Goswami, Naman Goyal, Anthony Hartshorn, Saghar Hosseini, Rui Hou, Hakan Inan, Marcin Kardas, Viktor Kerkez, Madian Khabsa, Isabel Kloumann, Artem Korenev, Punit~Singh Koura, Marie-Anne Lachaux, Thibaut Lavril, Jenya Lee, Diana Liskovich, Yinghai Lu, Yuning Mao, Xavier Martinet, Todor Mihaylov, Pushkar Mishra, Igor Molybog, Yixin Nie, Andrew Poulton, Jeremy Reizenstein, Rashi Rungta, Kalyan Saladi, Alan Schelten, Ruan Silva, Eric~Michael Smith, Ranjan Subramanian, Xiaoqing~Ellen Tan, Binh Tang, Ross Taylor, Adina Williams, Jian~Xiang Kuan, Puxin Xu, Zheng Yan, Iliyan Zarov, Yuchen Zhang, Angela Fan, Melanie Kambadur, Sharan Narang, Aurelien Rodriguez, Robert Stojnic, Sergey Edunov, and Thomas
  Scialom.
\newblock Llama 2: Open foundation and fine-tuned chat models, 2023.
\newblock URL \url{https://arxiv.org/abs/2307.09288}.

\bibitem[Tseng et~al.(2024)Tseng, Chee, Sun, Kuleshov, and Sa]{tseng2024quipbetterllmquantization}
Albert Tseng, Jerry Chee, Qingyao Sun, Volodymyr Kuleshov, and Christopher~De Sa.
\newblock Quip\#: Even better llm quantization with hadamard incoherence and lattice codebooks, 2024.
\newblock URL \url{https://arxiv.org/abs/2402.04396}.

\bibitem[Wang et~al.(2024{\natexlab{a}})Wang, Ma, Wang, and Wei]{wang2024qsparselargelanguagemodels}
Hongyu Wang, Shuming Ma, Ruiping Wang, and Furu Wei.
\newblock Q-sparse: All large language models can be fully sparsely-activated, 2024{\natexlab{a}}.
\newblock URL \url{https://arxiv.org/abs/2407.10969}.

\bibitem[Wang et~al.(2024{\natexlab{b}})Wang, Ma, Cao, Zhang, Xue, Shi, Zheng, Miao, Yang, Cao, Yang, and Yang]{Microsoft}
Lei Wang, Lingxiao Ma, Shijie Cao, Quanlu Zhang, Jilong Xue, Yining Shi, Ningxin Zheng, Ziming Miao, Fan Yang, Ting Cao, Yuqing Yang, and Mao Yang.
\newblock Ladder: Enabling efficient low-precision deep learning computing through hardware-aware tensor transformation.
\newblock In \emph{18th USENIX Symposium on Operating Systems Design and Implementation (OSDI 24)}, 2024{\natexlab{b}}.
\newblock URL \url{https://www.usenix.org/conference/osdi24/presentation/wang-lei}.

\bibitem[Wang et~al.(2019)Wang, Wohlwend, and Lei]{wang2019structured}
Ziheng Wang, Jeremy Wohlwend, and Tao Lei.
\newblock Structured pruning of large language models.
\newblock \emph{arXiv preprint arXiv:1910.04732}, 2019.

\bibitem[Wei et~al.(2022)Wei, Zhang, Zhang, Gong, Zhang, Zhang, Yu, and Liu]{wei2022outlier}
Xiuying Wei, Yunchen Zhang, Xiangguo Zhang, Ruihao Gong, Shanghang Zhang, Qi~Zhang, Fengwei Yu, and Xianglong Liu.
\newblock Outlier suppression: Pushing the limit of low-bit transformer language models.
\newblock \emph{Advances in Neural Information Processing Systems}, 35:\penalty0 17402--17414, 2022.

\bibitem[Xia et~al.(2024)Xia, Zheng, Li, Zhuang, Zhou, Qiu, Li, Lin, and Song]{xia2024flash}
Haojun Xia, Zhen Zheng, Yuchao Li, Donglin Zhuang, Zhongzhu Zhou, Xiafei Qiu, Yong Li, Wei Lin, and Shuaiwen~Leon Song.
\newblock Flash-{LLM}: Enabling cost-effective and highly-efficient large generative model inference with unstructured sparsity.
\newblock In \emph{Proceedings of VLDB}, 2024.

\bibitem[Xia et~al.(2023)Xia, Gao, Zeng, and Chen]{xia2023sheared}
Mengzhou Xia, Tianyu Gao, Zhiyuan Zeng, and Danqi Chen.
\newblock Sheared llama: Accelerating language model pre-training via structured pruning.
\newblock \emph{arXiv preprint arXiv:2310.06694}, 2023.

\bibitem[Xiao et~al.(2022)Xiao, Lin, Seznec, Wu, Demouth, and Han]{xiao2022smoothquant}
Guangxuan Xiao, Ji~Lin, Mickael Seznec, Hao Wu, Julien Demouth, and Song Han.
\newblock {SmoothQuant}: Accurate and efficient post-training quantization for large language models.
\newblock \emph{arXiv:2211.10438}, 2022.

\bibitem[Xiao et~al.(2024)Xiao, Tian, Chen, Han, and Lewis]{xiao2024efficientstreaminglanguagemodels}
Guangxuan Xiao, Yuandong Tian, Beidi Chen, Song Han, and Mike Lewis.
\newblock Efficient streaming language models with attention sinks, 2024.
\newblock URL \url{https://arxiv.org/abs/2309.17453}.

\bibitem[Yuan et~al.(2023)Yuan, Niu, Liu, Liu, Wang, Shang, Sun, Wu, Wu, and Wu]{yuan2023rptq}
Zhihang Yuan, Lin Niu, Jiawei Liu, Wenyu Liu, Xinggang Wang, Yuzhang Shang, Guangyu Sun, Qiang Wu, Jiaxiang Wu, and Bingzhe Wu.
\newblock Rptq: Reorder-based post-training quantization for large language models.
\newblock \emph{arXiv preprint arXiv:2304.01089}, 2023.

\bibitem[Zellers et~al.(2019)Zellers, Holtzman, Bisk, Farhadi, and Choi]{zellers2019hellaswagmachinereallyfinish}
Rowan Zellers, Ari Holtzman, Yonatan Bisk, Ali Farhadi, and Yejin Choi.
\newblock Hellaswag: Can a machine really finish your sentence?, 2019.
\newblock URL \url{https://arxiv.org/abs/1905.07830}.

\bibitem[Zhang et~al.(2022{\natexlab{a}})Zhang, Roller, Goyal, Artetxe, Chen, Chen, Dewan, Diab, Li, Lin, Mihaylov, Ott, Shleifer, Shuster, Simig, Koura, Sridhar, Wang, and Zettlemoyer]{zhang2022optopenpretrainedtransformer}
Susan Zhang, Stephen Roller, Naman Goyal, Mikel Artetxe, Moya Chen, Shuohui Chen, Christopher Dewan, Mona Diab, Xian Li, Xi~Victoria Lin, Todor Mihaylov, Myle Ott, Sam Shleifer, Kurt Shuster, Daniel Simig, Punit~Singh Koura, Anjali Sridhar, Tianlu Wang, and Luke Zettlemoyer.
\newblock Opt: Open pre-trained transformer language models, 2022{\natexlab{a}}.
\newblock URL \url{https://arxiv.org/abs/2205.01068}.

\bibitem[Zhang et~al.(2024{\natexlab{a}})Zhang, Li, Wang, Shen, Plank, Bischl, Rezaei, and Kawaguchi]{zhang2024finercutfinergrainedinterpretablelayer}
Yang Zhang, Yawei Li, Xinpeng Wang, Qianli Shen, Barbara Plank, Bernd Bischl, Mina Rezaei, and Kenji Kawaguchi.
\newblock Finercut: Finer-grained interpretable layer pruning for large language models, 2024{\natexlab{a}}.
\newblock URL \url{https://arxiv.org/abs/2405.18218}.

\bibitem[Zhang et~al.(2022{\natexlab{b}})Zhang, Lin, Liu, Li, Sun, and Zhou]{zhang2022moeficationtransformerfeedforwardlayers}
Zhengyan Zhang, Yankai Lin, Zhiyuan Liu, Peng Li, Maosong Sun, and Jie Zhou.
\newblock Moefication: Transformer feed-forward layers are mixtures of experts, 2022{\natexlab{b}}.
\newblock URL \url{https://arxiv.org/abs/2110.01786}.

\bibitem[Zhang et~al.(2024{\natexlab{b}})Zhang, Song, Yu, Han, Lin, Xiao, Song, Liu, Mi, and Sun]{zhang2024relu2winsdiscoveringefficient}
Zhengyan Zhang, Yixin Song, Guanghui Yu, Xu~Han, Yankai Lin, Chaojun Xiao, Chenyang Song, Zhiyuan Liu, Zeyu Mi, and Maosong Sun.
\newblock Relu$^2$ wins: Discovering efficient activation functions for sparse llms, 2024{\natexlab{b}}.
\newblock URL \url{https://arxiv.org/abs/2402.03804}.

\bibitem[Zhao et~al.(2023)Zhao, Jain, and Zhao]{zhao2023adaptiveactivationbasedstructuredpruning}
Kaiqi Zhao, Animesh Jain, and Ming Zhao.
\newblock Adaptive activation-based structured pruning, 2023.
\newblock URL \url{https://arxiv.org/abs/2201.10520}.

\bibitem[Zheng et~al.(2024)Zheng, Bai, Liu, Mao, Chen, Lai, and Prakash]{zheng2024learnefficientbuildstructured}
Haizhong Zheng, Xiaoyan Bai, Xueshen Liu, Z.~Morley Mao, Beidi Chen, Fan Lai, and Atul Prakash.
\newblock Learn to be efficient: Build structured sparsity in large language models, 2024.
\newblock URL \url{https://arxiv.org/abs/2402.06126}.

\end{thebibliography}
\bibliographystyle{iclr2025_conference}

\appendix
\section{Appendix}

\subsection{Derivation of Sparsification Error}
\label{appendix:proof}

\begin{figure}[h]
\centering
\includegraphics[width=0.5\linewidth]{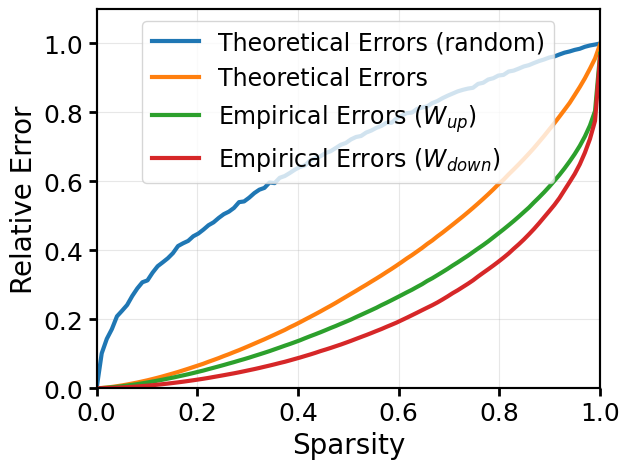}
\caption{Errors at Block 16 of Llama-3-8B: Gaussian-based theoretical errors from random and magnitude based sparsification, empirical errors from $\mathbf{W}_\text{up}$ and $\mathbf{W}_\text{down}$.}
\label{fig:tensor-level-error}
\end{figure}

We derive the error of magnitude-based activation sparsity for the case where $\mathbf{W}$ and $\mathbf{X}$ are independent Gaussian in Theorem \ref{theorem:error}. Our error metric is $\frac{\mathbb{E}_\mathbf{X}[\|\mathbf{Y} - \hat{\mathbf{Y}}\|_2]}{\mathbb{E}_\mathbf{X}[\|\mathbf{Y}\|_2]}$, where $\mathbf{X}$ is the input, $\hat{\mathbf{Y}}$ is the predicted output and $\mathbf{Y}$ is the ground truth output. We plot this error in Figure \ref{fig:tensor-level-error}, along with empirical errors on $\mathbf{W}_\text{up,down}$ in Block 16 of Llama-3-8B, and the theoretical error obtained from random sparsification.

\newtheorem{appdefinition}{Definition}[section]
\newtheorem{applemma}{Lemma}[section]
\newtheorem{apptheorem}{Theorem}[section]

\begin{appdefinition}
For a random vector $\mathbf{X} = (X_1, \dots, X_n) $ and sparsity level $p \in [0, 1]$, define the threshold $t_p$ as
\[ \frac{1}{n} \sum_{i=1}^n \mathbb{P}(|X_i| \leq t_p) = p.  \]
The sparsification function $s_{t_p} : \mathbb{R}^n \to \mathbb{R}^n$ is defined as:
\[ s_{t_p}(\mathbf{X}) = (s_{t_p}(X_1), \ldots, s_{t_p}(X_n)) \]
where for each component:
\[ s_{t_p}(X_i) = 
\begin{cases}
0 & \text{if } |X_i| \leq t_p \\
X_i & \text{otherwise}
\end{cases} 
\]
\end{appdefinition}

\begin{applemma}[Variance of Scalar Sparsified Error]
For independent random normal variables $X \sim N(0, \sigma_X^2), W \sim N(0,\sigma^2_W)$ and sparsification function $s_{t_p}(\cdot)$, the variance of $(X-s_{t_p}(X))W$ is given by:
\[
\mathrm{Var}((X - s_{t_p}(X))W) = \sigma_X^2\sigma_W^2\Big[p-\frac{2t_p}{\sigma_X}\varphi(\frac{t_p}{\sigma_X})\Big]
\]
where $\varphi(t) = \frac{1}{\sqrt{2\pi}}e^{-\frac{1}{2}t^2}$ is the probability density function of the standard normal distribution.
\end{applemma}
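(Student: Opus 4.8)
The plan is to first strip the definition of the sparsification function down to an indicator, then reduce the variance to a truncated second moment, and finally evaluate that moment by a change of variables and one integration by parts.

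First I would observe that, componentwise, $X - s_{t_p}(X) = X\,\mathbf{1}\{|X| \le t_p\}$, since $s_{t_p}$ leaves $X$ untouched when $|X| > t_p$ and zeroes it otherwise. Write $Z := X\,\mathbf{1}\{|X|\le t_p\}$. Because $W$ is independent of $X$ (hence of $Z$) and $\mathbb{E}[W]=0$, we have $\mathbb{E}[ZW]=\mathbb{E}[Z]\,\mathbb{E}[W]=0$, so $\mathrm{Var}(ZW)=\mathbb{E}[Z^2W^2]=\mathbb{E}[Z^2]\,\mathbb{E}[W^2]=\sigma_W^2\,\mathbb{E}[Z^2]$, again using independence and $\mathbb{E}[W^2]=\sigma_W^2$. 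This isolates the one nontrivial quantity, the truncated second moment $\mathbb{E}[Z^2]=\mathbb{E}\!\left[X^2\,\mathbf{1}\{|X|\le t_p\}\right]$.

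Next I would compute $\mathbb{E}[Z^2]=\int_{-t_p}^{t_p} x^2\,\tfrac{1}{\sigma_X}\varphi(x/\sigma_X)\,dx$, substitute $u=x/\sigma_X$ to get $\sigma_X^2\int_{-a}^{a}u^2\varphi(u)\,du$ with $a:=t_p/\sigma_X$, and then integrate by parts using the identity $\varphi'(u)=-u\varphi(u)$, i.e. $u^2\varphi(u) = -u\,\varphi'(u)$. This yields $\int_{-a}^{a} u^2\varphi(u)\,du = \big[-u\varphi(u)\big]_{-a}^{a} + \int_{-a}^{a}\varphi(u)\,du = -2a\varphi(a) + \big(2\Phi(a)-1\big)$, using the evenness of $\varphi$. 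Finally, since $2\Phi(a)-1 = \mathbb{P}(|X|\le \sigma_X a) = \mathbb{P}(|X|\le t_p)$, and the threshold definition with $n=1$ gives exactly $\mathbb{P}(|X|\le t_p)=p$, we get $\mathbb{E}[Z^2] = \sigma_X^2\big(p - 2a\varphi(a)\big) = \sigma_X^2\big(p - \tfrac{2t_p}{\sigma_X}\varphi(\tfrac{t_p}{\sigma_X})\big)$. Multiplying by $\sigma_W^2$ gives the claim.

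There is no real obstacle here; the argument is elementary. The only points requiring care are the mean-zero reduction $\mathrm{Var}(ZW)=\sigma_W^2\,\mathbb{E}[Z^2]$ (which needs both independence and $\mathbb{E}[W]=0$) and the boundary bookkeeping in the integration by parts, including the use of $\varphi$'s symmetry so that the two endpoint terms combine into $-2a\varphi(a)$ and the CDF terms into $2\Phi(a)-1$. If one wanted to be fully rigorous one could also note that $\mathbb{E}[Z^2]<\infty$ (it is a truncated Gaussian moment) so all the interchanges and the finiteness of the variance are justified.
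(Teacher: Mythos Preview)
Your proof is correct. Both you and the paper reduce to computing the truncated second moment $\mathbb{E}[X^2\mathbf{1}\{|X|\le t_p\}]$, but the computational step differs. The paper conditions on the event $\{|X|\le t_p\}$, recognizes $X$ restricted to $[-t_p,t_p]$ as a truncated normal, and then quotes the standard truncated-normal variance formula
\[
\sigma_X^2\left[1-\frac{t\varphi(t)-(-t)\varphi(-t)}{\Phi(t)-\Phi(-t)}-\Big(\frac{\varphi(t)-\varphi(-t)}{\Phi(t)-\Phi(-t)}\Big)^2\right],
\]
simplifying via the symmetry $\varphi(-t)=\varphi(t)$ and then using $2\Phi(t)-1=p$ to cancel the denominator against the factor $p$. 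You instead compute the integral $\int_{-a}^{a}u^2\varphi(u)\,du$ directly by integration by parts using $\varphi'(u)=-u\varphi(u)$, which is more self-contained: it does not assume the reader knows the truncated-normal moment formula, and it makes transparent where the term $-2a\varphi(a)$ comes from (the boundary contribution) versus the term $p$ (the interior CDF contribution). The paper's route is slightly shorter if one already has the truncated-normal formula at hand; yours is more elementary and arguably cleaner as a standalone derivation.
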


\begin{proof}

For $|x|\leq t_p$, $X-s_{t_p}(X)$ follows a truncated normal distribution with lower bound and upper bound given by $-t_p$ and $t_p$ respectively. We thus have: 
\begin{equation*}
\mathrm{Var}((X - s_{t_p}(X))W) = p\mathrm{Var}(X - s_{t_p}(X) \,\big|\, \lvert X \rvert \leq t_p)\mathrm{Var}(W)
\end{equation*}
\begin{align*}
&= \sigma_X^2p \Bigg[ 1 - \frac{t\varphi(t) - (-t)\varphi(-t)}{\Phi(t) - \Phi(-t)} - \bigg( \frac{\varphi(t) - \varphi(-t)}{\Phi(t) - \Phi(-t)}\bigg)^2 \Bigg] \sigma_W^2 \\[1ex]
&= \sigma_X^2 \sigma_W^2p \bigg[ 1 - \frac{2t\varphi(t)}{2\Phi(t) - 1} \bigg] \\[1ex]
&= \sigma_X^2 \sigma_W^2 \bigg[p - \frac{2t_p}{\sigma_X}\varphi\Big(\frac{t_p}{\sigma_X}\Big)\bigg]
\end{align*}

where $t=\frac{t_p}{\sigma_X}$, and $\Phi(t) = \frac{1}{2}(1+\mathrm{erf}(x/\sqrt{2}))$ is the cumulative density function of the standard normal distribution.
\end{proof}

\begin{applemma}[Expected $\ell_2$ Norm of Sparsified Matrix-Vector Error]
Let $\mathbf{X} \in \mathbb{R}^m$ be a vector where each $X_i \sim N(0, \sigma_X^2)$, and $\mathbf{W} \in \mathbb{R}^{n \times m}$ be a matrix where each $W_{ji} \sim N(0, \sigma_W^2)$, with all entries independent. For a sparsification function $s_{t_p}(\cdot)$, let $\hat{\mathbf{Y}} = (\mathbf{X} - s_{t_p}(\mathbf{X}))\mathbf{W}^\top$. Then:

1) The variance of the $j$-th entry of $\hat{\mathbf{Y}}$ is:
   \[
   \text{Var}(\hat{Y}_j) = n \sigma_X^2\sigma_W^2\left[p-\frac{2t_p}{\sigma_X}\varphi\left(\frac{t_p}{\sigma_X}\right)\right]
   \]

2) The expectation of the $\ell_2$ norm of $\hat{\mathbf{Y}}$ is:
   \[
   \mathbb{E}[\|\mathbf{\hat{Y}}\|_2] = \sigma_X\sigma_W \sqrt{mn \left[p-\frac{2t_p}{\sigma_X}\varphi\left(\frac{t_p}{\sigma_X}\right)\right]}
   \]

where $t_p$ is the threshold value satisfying $F_{|X|}(t_p) = p$, and $\varphi(t) = \frac{1}{\sqrt{2\pi}}e^{-\frac{1}{2}t^2}$ is the probability density function of the standard normal distribution.
\end{applemma}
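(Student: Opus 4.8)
The plan is to build directly on the previous lemma (Variance of Scalar Sparsified Error), which already gives the per-term variance $\Var((X-s_{t_p}(X))W) = \sigma_X^2\sigma_W^2[p - \tfrac{2t_p}{\sigma_X}\varphi(\tfrac{t_p}{\sigma_X})]$. For part (1), I would write the $j$-th entry of $\hat{\mathbf{Y}}$ as $\hat{Y}_j = \sum_{i=1}^m (X_i - s_{t_p}(X_i)) W_{ji}$, a sum of $m$ terms. Each term has mean zero (by symmetry of the centered normal and the even sparsification rule, $X_i - s_{t_p}(X_i)$ is symmetric about $0$, and $W_{ji}$ is independent with mean zero), so the expectation of $\hat{Y}_j$ is zero. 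Because the $X_i$ are independent across $i$ and the $W_{ji}$ are all independent, the $m$ summands are mutually independent, so the variance of the sum is the sum of the variances, giving $\Var(\hat{Y}_j) = m\,\sigma_X^2\sigma_W^2[p - \tfrac{2t_p}{\sigma_X}\varphi(\tfrac{t_p}{\sigma_X})]$. (I note the statement writes $n$ where the summation index runs to $m$; I would follow the dimensional bookkeeping $\mathbf{X}\in\mathbb{R}^m$, $\mathbf{W}\in\mathbb{R}^{n\times m}$ so that each $\hat{Y}_j$ is an $m$-fold sum and $\hat{\mathbf{Y}}\in\mathbb{R}^n$.)

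For part (2), I would first observe that each $\hat{Y}_j$ is a sum of $m$ i.i.d. mean-zero terms with the common variance computed above, so $\E[\hat{Y}_j^2] = \Var(\hat{Y}_j) = m\,\sigma_X^2\sigma_W^2[p - \tfrac{2t_p}{\sigma_X}\varphi(\tfrac{t_p}{\sigma_X})]$. Then $\E[\|\hat{\mathbf{Y}}\|_2^2] = \sum_{j=1}^n \E[\hat{Y}_j^2] = mn\,\sigma_X^2\sigma_W^2[p - \tfrac{2t_p}{\sigma_X}\varphi(\tfrac{t_p}{\sigma_X})]$. To pass from $\E[\|\hat{\mathbf{Y}}\|_2^2]$ to $\E[\|\hat{\mathbf{Y}}\|_2]$ I would invoke the approximation $\E[\|\hat{\mathbf{Y}}\|_2] \approx \sqrt{\E[\|\hat{\mathbf{Y}}\|_2^2]}$, which is the step that trades exactness for a clean closed form; this is justified by Jensen's inequality in one direction and by concentration of the squared norm around its mean (the entries are approximately jointly Gaussian for moderate $m$, so $\|\hat{\mathbf{Y}}\|_2^2$ concentrates, making the gap in Jensen negligible). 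Substituting then yields $\E[\|\hat{\mathbf{Y}}\|_2] = \sigma_X\sigma_W\sqrt{mn\,[p - \tfrac{2t_p}{\sigma_X}\varphi(\tfrac{t_p}{\sigma_X})]}$.

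The main obstacle is the last move, $\E[\|\hat{\mathbf{Y}}\|_2] \approx \sqrt{\E[\|\hat{\mathbf{Y}}\|_2^2]}$: it is not an identity, and a fully rigorous treatment would require either invoking a central-limit / concentration argument for the norm or stating the result as an approximation (consistent with the Remark in the main text that this analysis rests on a simplified independence model). I would handle it by being explicit that the second moment is exact and the passage to the first moment is the standard $\sqrt{\cdot}$ approximation, optionally remarking that it becomes exact in the limit of large $m$ where $\hat{Y}_j$ is Gaussian and $\|\hat{\mathbf{Y}}\|_2/\sqrt{\Var(\hat Y_j)}$ is a $\chi$-distributed quantity concentrating at $\sqrt{n}$. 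Everything else — mean-zero-ness, additivity of variance under independence, and summing over the $n$ output coordinates — is routine and follows immediately from the preceding lemma.
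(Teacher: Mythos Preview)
Your approach is essentially the paper's: invoke the previous lemma for each summand, add variances by independence for part (1), then sum second moments over the output coordinates and take the square root for part (2). You are actually more careful than the paper in two places --- you catch the $m$/$n$ indexing inconsistency (the paper has the same slip), and you correctly flag $\mathbb{E}[\|\hat{\mathbf{Y}}\|_2] \approx \sqrt{\mathbb{E}[\|\hat{\mathbf{Y}}\|_2^2]}$ as an approximation, whereas the paper simply ``takes the square root'' without comment (after an unnecessary covariance computation and an incorrect ``uncorrelated $\Rightarrow$ independent'' step that you rightly omit, since linearity of expectation alone yields $\mathbb{E}[\|\hat{\mathbf{Y}}\|_2^2]=\sum_j \Var(\hat{Y}_j)$).
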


\begin{proof} For the variance of $\hat{Y}_j$:
   The $j$-th entry of $\hat{\mathbf{Y}}$ is the sum of $n$ independent products $(X_i - s_{t_p}(X_i))W_{ji}$. Each product has variance $\sigma_X^2\sigma_W^2[p-\frac{2t_p}{\sigma_X p}\varphi(\frac{t_p}{\sigma_X})]$. Since variances of independent terms add, we multiply this by $n$ to get the result.

For the expectation of $\|\hat{\mathbf{Y}}\|_2$:
   We first show $\text{Cov}(Y_j, Y_k) = 0$ for $j \neq k$:
   
   \[\mathbb{E}[\hat{Y}_j \hat{Y}_k] = \mathbb{E}\left[\sum_{i=1}^n (X_i - s_{t_p}(X_i))^2 W_{ji}W_{ki}\right] = 0\]
   
   $\mathbb{E}[W_{ji}W_{ki}] = 0$ for $j \neq k$ due to independence and zero mean. $\hat{Y}_j$ and $\hat{Y}_k$ are uncorrelated for $j \neq k$ and are thus independent. Therefore:

   \[\mathbb{E}[\|\mathbf{\hat{Y}}\|^2] = \mathbb{E}\left[\sum_{j=1}^m \hat{Y}_j^2\right] = \sum_{j=1}^m \mathbb{E}[\hat{Y}_j^2] = \sum_{j=1}^m \text{Var}(\hat{Y}_j)\]

   Substituting the variance from part 1, summing over $m$ components, and taking the square-root completes the proof.
\end{proof}

\begin{apptheorem}[Distributional Relative Error]
\label{theorem:error}
Let $\mathbf{X} \in \mathbb{R}^m$ and $\mathbf{W} \in \mathbb{R}^{n \times m}$ with elements independently drawn from $N(0, \sigma_X^2)$ and $N(0, \sigma_W^2)$ respectively. For a sparsification function $s_{t_p}(\cdot)$, define $\hat{\mathbf{Y}} = s_{t_p}(\mathbf{X})\mathbf{W}^\top$ and $\mathbf{Y} = \mathbf{X}\mathbf{W}^T$. The distributional relative error is given by:

\[
\frac{\mathbb{E}_\mathbf{X}[\|\mathbf{Y} - \hat{\mathbf{Y}}\|_2]}{\mathbb{E}_\mathbf{X}[\|\mathbf{Y}\|_2]} = \sqrt{p - \frac{2t_p}{\sigma_X}\varphi\left(\frac{t_p}{\sigma_X}\right)}
\]

where $\varphi(t) = \frac{1}{\sqrt{2\pi}}e^{-\frac{1}{2}t^2}$ is the standard normal probabilty density function.
\end{apptheorem}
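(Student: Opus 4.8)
The plan is to reduce the theorem to the two lemmas that precede it, since nearly all the work has already been done there. First I would observe that the numerator $\mathbb{E}_\mathbf{X}[\|\mathbf{Y} - \hat{\mathbf{Y}}\|_2]$ is exactly $\mathbb{E}[\|(\mathbf{X} - s_{t_p}(\mathbf{X}))\mathbf{W}^\top\|_2]$, which is the quantity computed in part (2) of the second lemma: it equals $\sigma_X\sigma_W\sqrt{mn\left[p - \frac{2t_p}{\sigma_X}\varphi\left(\frac{t_p}{\sigma_X}\right)\right]}$. The one subtlety here is that the lemma's expectation is over both $\mathbf{X}$ and $\mathbf{W}$, whereas the theorem writes $\mathbb{E}_\mathbf{X}$; I would note that since $\mathbf{W}$ is independent Gaussian and integrated out in the lemma's derivation (the covariance computation uses $\mathbb{E}[W_{ji}W_{ki}]=0$), the stated error metric should be read as an expectation over all randomness, or equivalently the $\mathbf{W}$-expectation is implicitly taken — I would state this cleanly at the start.

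Next I would handle the denominator $\mathbb{E}_\mathbf{X}[\|\mathbf{Y}\|_2] = \mathbb{E}[\|\mathbf{X}\mathbf{W}^\top\|_2]$. This is the special case of the same computation with $p = 1$, i.e. with no sparsification at all, so that $\mathbf{X} - s_{t_p}(\mathbf{X})$ is replaced by $\mathbf{X}$ itself. Concretely, each entry $Y_j = \sum_{i=1}^n X_i W_{ji}$ is a sum of $n$ independent mean-zero products each with variance $\sigma_X^2\sigma_W^2$, so $\mathrm{Var}(Y_j) = n\sigma_X^2\sigma_W^2$, and the same uncorrelated-coordinates argument as in the lemma gives $\mathbb{E}[\|\mathbf{Y}\|^2] = mn\sigma_X^2\sigma_W^2$, hence $\mathbb{E}[\|\mathbf{Y}\|_2]$... here is where care is needed.

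The main obstacle is that $\mathbb{E}[\|\cdot\|_2] \neq \sqrt{\mathbb{E}[\|\cdot\|^2]}$ in general, so taking the ratio of the two square-root-of-second-moment expressions is not literally the ratio of the $\ell_2$-norm expectations. The clean resolution is that both $\|\mathbf{Y}\|_2$ and $\|\mathbf{Y} - \hat{\mathbf{Y}}\|_2$, conditioned on $\mathbf{X}$ (or unconditionally), are norms of Gaussian vectors with i.i.d.\ coordinates — more precisely, for fixed $\mathbf{X}$ the vector $\mathbf{Y}-\hat{\mathbf{Y}} = (\mathbf{X}-s_{t_p}(\mathbf{X}))\mathbf{W}^\top$ has i.i.d.\ $N(0, \|\mathbf{X}-s_{t_p}(\mathbf{X})\|^2\sigma_W^2)$ coordinates, so its expected $\ell_2$ norm is $\|\mathbf{X}-s_{t_p}(\mathbf{X})\|\sigma_W \cdot c_m$ where $c_m = \mathbb{E}[\|\mathbf{Z}\|_2]$ for a standard $m$-dimensional Gaussian $\mathbf{Z}$ (a chi-distribution constant depending only on $m$). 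The identical constant $c_m$ appears in the denominator. Taking the ratio, $c_m$ cancels, and one is left with $\mathbb{E}_\mathbf{X}[\|\mathbf{X}-s_{t_p}(\mathbf{X})\|] / \mathbb{E}_\mathbf{X}[\|\mathbf{X}\|]$ times $\sigma_W/\sigma_W$; then the same chi-distribution argument over the $m$ i.i.d.\ coordinates of $\mathbf{X}$ (each $X_i - s_{t_p}(X_i)$ has second moment $\sigma_X^2[p - \frac{2t_p}{\sigma_X}\varphi(\frac{t_p}{\sigma_X})]$ by the first lemma's conditional-variance computation, and each $X_i$ has second moment $\sigma_X^2$) reduces the ratio to $\sqrt{p - \frac{2t_p}{\sigma_X}\varphi(\frac{t_p}{\sigma_X})}$. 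I would present this as: factor both expected norms as (dimension-dependent constant) $\times$ (per-coordinate standard deviation) $\times$ (another dimension-dependent constant), note all constants cancel, and read off the answer. If the authors instead intend the error metric to be $\sqrt{\mathbb{E}\|\mathbf{Y}-\hat{\mathbf{Y}}\|_2^2 / \mathbb{E}\|\mathbf{Y}\|_2^2}$ — which is how the two lemmas are phrased, with everything at the level of second moments — then the proof is even shorter: it is simply $\sqrt{mn\sigma_X^2\sigma_W^2[p - \ldots]} / \sqrt{mn\sigma_X^2\sigma_W^2}$, and the $mn\sigma_X^2\sigma_W^2$ cancels immediately. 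I would flag this interpretational point and give the computation in whichever form matches the lemma statements, since the algebra is identical up to that cancellation.
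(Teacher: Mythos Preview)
Your overall plan—quote the second lemma for the numerator, obtain the denominator as the unsparsified special case, and divide—is exactly what the paper does; its proof is two lines: cite the lemma for $\mathbb{E}[\|\mathbf{Y}-\hat{\mathbf{Y}}\|_2]=\sigma_X\sigma_W\sqrt{mn[p-\frac{2t_p}{\sigma_X}\varphi(\frac{t_p}{\sigma_X})]}$, assert $\mathbb{E}[\|\mathbf{Y}\|_2]=\sigma_X\sigma_W\sqrt{mn}$, and take the ratio. The paper does not address the $\mathbb{E}[\|\cdot\|_2]$ versus $\sqrt{\mathbb{E}[\|\cdot\|_2^2]}$ distinction at all—it silently identifies the two already in the lemma, so your second interpretation (everything at the level of second moments) is the one that matches.

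One caution on your first, more careful route: the chi-distribution cancellation works cleanly for the $\mathbf{W}$ layer, since conditioning on $\mathbf{X}$ genuinely makes $(\mathbf{X}-s_{t_p}(\mathbf{X}))\mathbf{W}^\top$ an i.i.d.\ Gaussian vector and the constant $c_n$ appears identically in numerator and denominator. But the ``same chi-distribution argument over the $m$ i.i.d.\ coordinates of $\mathbf{X}$'' does not go through as stated, because $X_i-s_{t_p}(X_i)=X_i\mathbb{1}[|X_i|\le t_p]$ is a truncated Gaussian, not a Gaussian, so $\|\mathbf{X}-s_{t_p}(\mathbf{X})\|_2$ is not a scaled chi random variable and the dimension-dependent constant need not match the one for $\|\mathbf{X}\|_2$. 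The identity $\mathbb{E}[\|\mathbf{X}-s_{t_p}(\mathbf{X})\|_2]/\mathbb{E}[\|\mathbf{X}\|_2]=\sqrt{p-\frac{2t_p}{\sigma_X}\varphi(\frac{t_p}{\sigma_X})}$ holds only asymptotically (large $m$, via the law of large numbers) or at the second-moment level. So if you want an exact statement, stick with your RMS interpretation; that is what the paper is implicitly computing anyway.
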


\begin{proof}
From the previous theorem, we have $\mathbb{E}_\mathbf{X}[\|\mathbf{Y} - \hat{\mathbf{Y}}\|_2] = \sigma_X\sigma_W \sqrt{mn\left[p-\frac{2t_p}{\sigma_X}\varphi(\frac{t_p}{\sigma_X})\right]}$. 

For the unsparsified case, we have $\mathbb{E}_\mathbf{X}[\|\mathbf{Y}\|_2] = \sigma_X\sigma_W\sqrt{mn}$. Dividing these expectations yields the result.
\end{proof}

\subsection{Full Downstream Task Results}
\label{appendix:full-results}

We provide the full downstream task results for all evaluated models. For Llama-3-8B in Table \ref{tab:llama-3-8B}, we also provide results obtained from the uniform sparsity configuration, showing that the greedy sparsity configuration outperforms across the board. For CATS, we additionally provide the sparsity of the hidden state in the intermediate of the MLP blocks.

\begin{table}[htbp]
\small
\centering
\caption{Full downstream task results for Llama-3-8B.}
\label{tab:llama-3-8B}
\begin{tabular}{lcccccccc}
\hline
Method & Sparsity & MMLU & ARC & HellaSwag & GSM8K & PiQA & WinoGrande & Average \\
\hline
Baseline & 0 & 65.08 & 57.68 & 82.20 & 49.81 & 80.79 & 72.85 & 68.07 \\
\hline
\multirow{4}{*}{Uniform} & 25 & 64.48 & 57.34 & 81.63 & 48.52 & 79.92 & 73.88 & 67.63 \\
 & 40 & 62.35 & 54.86 & 79.98 & 43.59 & 79.11 & 72.22 & 65.35 \\
 & 50 & 59.07 & 53.50 & 77.60 & 36.47 & 79.22 & 70.17 & 62.67 \\
 & 65 & 43.48 & 42.58 & 65.81 & 16.22 & 75.84 & 65.43 & 51.56 \\
\hline
\multirow{4}{*}{Greedy} & 25 & 64.61 & 57.17 & 81.77 & 48.90 & 80.47 & 73.48 & 67.73 \\
 & 40 & 62.69 & 57.08 & 80.42 & 43.82 & 80.47 & 72.77 & 66.21 \\
 & 50 & 59.68 & 53.84 & 78.44 & 38.06 & 78.94 & 71.59 & 63.42 \\
 & 65 & 44.54 & 43.86 & 68.85 & 17.51 & 77.15 & 66.06 & 52.99 \\
\hline
CATS & 25 (46.45) & 61.18 & 54.61 & 80.20 & 39.88 & 79.76 & 69.30 & 64.15 \\
\hline
\end{tabular}

\end{table}

\begin{table}[H]
\small
\centering
\caption{Full downstream task results for Llama-3-70B.}
\label{tab:llama-3-70B}
\begin{tabular}{lcccccccc}
\hline
Method & Sparsity & MMLU & ARC & HellaSwag & GSM8K & PiQA & WinoGrande & Average \\
\hline
Baseline & 0 & 78.70 & 69.71 & 87.94 & 81.12 & 84.55 & 80.43 & 80.41 \\
\hline
\multirow{4}{*}{Greedy} & 25 & 78.44 & 69.37 & 87.83 & 80.36 & 84.55 & 80.74 & 80.22 \\
 & 40 & 77.92 & 68.52 & 87.35 & 78.77 & 83.79 & 79.40 & 79.29 \\
 & 50 & 76.48 & 67.75 & 86.73 & 78.17 & 83.08 & 77.35 & 78.26 \\
 & 65 & 71.39 & 63.31 & 83.71 & 64.44 & 80.74 & 74.82 & 73.07 \\
\hline
CATS & 25 (45.54) & 77.67 & 67.92 & 87.75 & 78.01 & 84.27 & 79.87 & 79.25 \\
\hline
\end{tabular}
\end{table}

\begin{table}[htbp]
\small
\centering
\caption{Full downstream task results for Llama-2-7B.}
\label{tab:llama-2-7B}
\begin{tabular}{lcccccccc}
\hline
Method & Sparsity & MMLU & ARC & HellaSwag & GSM8K & PiQA & WinoGrande & Average \\
\hline
Baseline & 0 & 45.78 & 52.47 & 78.96 & 13.95 & 78.94 & 68.90 & 56.50 \\
\hline
\multirow{4}{*}{Greedy} & 25 & 45.34 & 52.56 & 78.66 & 14.25 & 78.78 & 68.90 & 56.42 \\
 & 40 & 42.81 & 53.16 & 78.28 & 12.66 & 78.40 & 67.40 & 55.45 \\
 & 50 & 40.52 & 52.47 & 76.54 & 10.84 & 77.86 & 67.32 & 54.26 \\
 & 65 & 31.63 & 42.83 & 69.64 & 4.62 & 76.55 & 63.69 & 48.16 \\
\hline
CATS & 25 (56.2) & 42.05 & 52.39 & 78.20 & 10.24 & 77.64 & 67.09 & 54.60 \\
\hline
\end{tabular}
\end{table}

\begin{table}[H]
\small
\centering
\caption{Full downstream task results for Llama-2-13B.}
\label{tab:llama-2-13B}
\begin{tabular}{lcccccccc}
\hline
Method & Sparsity & MMLU & ARC & HellaSwag & GSM8K & PiQA & WinoGrande & Average \\
\hline
Baseline & 0 & 54.76 & 59.39 & 82.18 & 23.05 & 71.98 & 80.69 & 62.01 \\
\hline
\multirow{4}{*}{Greedy} & 25 & 54.96 & 59.47 & 82.31 & 23.58 & 72.14 & 80.79 & 62.21 \\
 & 40 & 54.15 & 58.02 & 82.11 & 22.44 & 70.72 & 80.20 & 61.27 \\
 & 50 & 52.13 & 57.85 & 81.38 & 19.71 & 71.11 & 80.25 & 60.41 \\
 & 65 & 44.37 & 53.24 & 76.95 & 12.81 & 68.59 & 78.29 & 55.71 \\
\hline
CATS & 25 (56.02) & 52.31 & 57.76 & 82.73 & 20.32 & 70.24 & 79.49 & 60.48 \\
\hline
\end{tabular}
\end{table}

\begin{table}[H]
\small
\centering
\caption{Full downstream task results for Llama-2-70B.}
\label{tab:llama-2-70B}
\begin{tabular}{lcccccccc}
\hline
Method & Sparsity & MMLU & ARC & HellaSwag & GSM8K & PiQA & WinoGrande & Average \\
\hline
Baseline & 0 & 68.71 & 67.49 & 87.02 & 52.38 & 82.70 & 77.58 & 72.65 \\
\hline
\multirow{4}{*}{Greedy} & 25 & 68.80 & 67.24 & 86.93 & 52.38 & 82.70 & 77.98 & 72.67 \\
 & 40 & 67.75 & 66.81 & 86.88 & 53.90 & 82.48 & 77.58 & 72.57 \\
 & 50 & 66.79 & 66.72 & 86.38 & 52.69 & 82.59 & 76.95 & 72.02 \\
 & 65 & 62.75 & 63.74 & 84.96 & 45.41 & 81.72 & 77.19 & 69.30 \\
\hline
CATS & 25 (45.54) & 67.45 & 66.81 & 86.96 & 50.95 & 82.92 & 76.48 & 71.93 \\
\hline
\end{tabular}
\end{table}

\begin{table}[H]
\small
\centering
\caption{Full downstream task results for Mistral 7B.}
\label{tab:mistral-7B}
\begin{tabular}{lcccccccc}
\hline
Method & Sparsity & MMLU & ARC & HellaSwag & GSM8K & PiQA & WinoGrande & Average \\
\hline
Baseline & 0 & 62.46 & 61.43 & 83.47 & 38.21 & 82.10 & 74.11 & 66.96 \\
\hline
\multirow{4}{*}{Greedy} & 25 & 62.02 & 61.52 & 83.35 & 37.53 & 81.77 & 73.56 & 66.63 \\
 & 40 & 61.00 & 60.84 & 82.65 & 33.89 & 81.28 & 73.09 & 65.46 \\
 & 50 & 59.02 & 59.90 & 81.38 & 31.62 & 81.28 & 71.74 & 64.16 \\
 & 65 & 51.92 & 54.01 & 76.79 & 21.01 & 80.41 & 69.46 & 58.93 \\
\hline
CATS & 25 (46.45) & 60.10 & 59.81 & 82.08 & 29.72 & 80.41 & 73.40 & 64.25 \\
\hline
\end{tabular}
\end{table}

\subsection{Transformer Architecture Overview}
\label{appendix:trans-arch}

A Transformer block consists of an attention layer followed by a multilayer perceptron (MLP). Each block contains seven weight matrices that process the input $\mathbf{x} \in \mathbb{R}^d$ in sequence:

\paragraph{Attention:} In Grouped Query Attention (GQA) \citep{ainslie2023gqatraininggeneralizedmultiquery}, the matrices $\mathbf{W}_\text{q} \in \mathbb{R}^{d \times d}$ and $\mathbf{W}_\text{k}, \mathbf{W}_\text{v} \in \mathbb{R}^{d_h \times d}$ project the input into query, key, and value representations, which are fed into the attention operation. After the attention operation, $\mathbf{W}_\text{o} \in \mathbb{R}^{d \times d_h}$ projects the output back to the model dimension.

\paragraph{MLP:} The SwiGLU \citep{shazeer2020gluvariantsimprovetransformer} MLP variant uses three matrices: $\mathbf{W}_\text{gate}, \mathbf{W}_\text{up} \in \mathbb{R}^{d_m \times d}$ which project to a higher dimension, and $\mathbf{W}_\text{down} \in \mathbb{R}^{d \times d_m}$ which projects back to the model dimension. The computation flow is:
\begin{equation*}
   \text{MLP}(\mathbf{x}) = (\text{SiLU}(\mathbf{x}\mathbf{W}_\text{gate}^\top) \odot \mathbf{x}\mathbf{W}_\text{up}^\top)\mathbf{W}_\text{down}^\top
\end{equation*}
where $\text{SiLU}(\mathbf{x}) = \mathbf{x} \odot \sigma(\mathbf{x})$, $\sigma$ is the sigmoid function, and $\odot$ denotes element-wise multiplication.

\subsection{Comparison to 2:4 Weight Sparsity}
\label{appendix:weight-sparsity}

We compare \oursmethod to MaskLLM \citep{fang2024maskllmlearnablesemistructuredsparsity}, a state-of-the-art approach to semi-structured 2:4 weight sparsity that learns weight masks through differentiable relaxation. The authors have not released checkpoints as of the time of writing, so we trained our own masks on Llama-3-8B using 2B tokens from C4. We use the greedily optimized sparsities described in Section \ref{subsubsec:greedyopt} for \oursmethod, and evaluate both methods on WikiText:

\begin{table}[H]
\small
\centering
\begin{tabular}{lc}
\toprule
Model & Perplexity \\
\midrule
Llama-3-8B (baseline) & 5.870 \\
Llama-3-8B + \oursmethod (50\%) & \textbf{6.673} \\
Llama-3-8B + MaskLLM (2:4) & 8.532 \\
Llama-3-8B + \oursmethod + MaskLLM & 9.590 \\
\bottomrule
\end{tabular}
\end{table}

We observe that \oursmethod outperforms MaskLLM, while being training-free. In contrast, MaskLLM is computationally expensive—training on Llama-3-8B requires 2B tokens and approximately 8B frozen parameters plus 12B learnable parameters, which took us roughly 800 H100 hours. The combination of both methods works well, suggesting they are complementary rather than mutually exclusive.

We additionally compare single-batch decoding speed-up on Llama-2-7B using a single A6000 GPU (using MaskLLM's reported numbers from their Table 6):

\begin{table}[H]
\small
\centering
\begin{tabular}{lc}
\toprule
Model & Speed-up \\
\midrule
Llama-2-7B + \oursmethod (50\%) & \textbf{1.78×} \\
Llama-2-7B + MaskLLM (2:4) & 1.4× \\
\bottomrule
\end{tabular}
\end{table}

This comparison may not be fully representative as 2:4 weight sparsity is more performant in high-batch settings and can additionally accelerate the prefill phase.

\subsection{Compatibility with Fine-tuning}

\begin{table}[h]
\small
\centering
\caption{Perplexity results with and without fine-tuning on Llama-3-8B.}
\label{table:finetuning-results}
\begin{tabular}{lcc}
\toprule
Sparsity Level & PPL (No Fine-tuning) & PPL (With Fine-tuning) \\
\midrule
0\% (baseline) & 5.870 & --- \\
50\% & 6.673 & 6.622 \\
60\% & 7.827 & 7.515 \\
70\% & 13.39 & 9.927 \\
90\% & 4.141 $\cdot$ 10\textsuperscript{5} & 4.589 $\cdot$ 10\textsuperscript{3} \\
\bottomrule
\end{tabular}
\end{table}

While \oursmethod is primarily designed as a training-free method, it can be further enhanced with fine-tuning. We fine-tune Llama-3-8B using LoRA \citep{hu2021loralowrankadaptationlarge} with a rank of 32 (approximately 1\% of parameters are trainable) and a learning rate of 0.0002. The model is fine-tuned on 30M tokens from C4. We evaluate on WikiText and use the greedily optimized sparsities described in Section \ref{subsubsec:greedyopt}.

We observe in Table \ref{table:finetuning-results} that fine-tuning provides marginal improvements at lower sparsity levels (50-60\%). The benefits are more pronounced at higher sparsity levels (70-90\%), where fine-tuning helps to recover some of the performance lost due to aggressive sparsification.

\end{document}